  \providecommand\BibTeX{{%
    \normalfont B\kern-0.5em{\scshape i\kern-0.25em b}\kern-0.8em\TeX}}}
\definecolor{green}{rgb}{0.05, 0.5, 0.06}
\newcommand{\method}{\textit{DosCond}\xspace}
\newtheorem{theorem}{Theorem}
\theoremstyle{remark}
\newtheorem{remark}[theorem]{Remark}
\begin{document}
\title{Condensing  Graphs via One-Step Gradient Matching}






\author{Wei Jin}\authornote{Work done while author was on internship at Amazon.}
\affiliation{%
  \institution{Michigan State University}
}
\email{jinwei2@msu.edu}

\author{Xianfeng Tang}
\affiliation{%
  \institution{Amazon}
}
\email{xianft@amazon.com}

\author{Haoming Jiang}
\affiliation{%
  \institution{Amazon}
}
\email{jhaoming@amazon.com}

\author{Zheng Li}
\affiliation{%
  \institution{Amazon}
}
\email{amzzhe@amazon.com}

\author{Danqing Zhang}
\affiliation{%
  \institution{Amazon}
}
\email{danqinz@amazon.com}

\author{Jiliang Tang}
\affiliation{%
  \institution{ Michigan State University}
}
\email{tangjili@msu.edu}

\author{Bing Yin}
\affiliation{%
  \institution{Amazon}
}
\email{alexbyin@amazon.com}

\renewcommand{\shortauthors}{Wei Jin et al.}

\begin{abstract}
As training deep learning models on large dataset takes a lot of time and resources, it is desired to construct a small synthetic dataset with which we can train deep learning models sufficiently. There are recent works that have explored solutions on condensing image datasets through complex bi-level optimization. For instance, dataset condensation (DC) matches network gradients w.r.t. large-real data and small-synthetic data, where the network weights are optimized for multiple steps at each outer iteration. However, existing approaches have their inherent limitations: (1) they are not directly applicable to graphs where the data is discrete; and (2) the condensation process is computationally expensive due to the involved nested optimization. To bridge the gap, we investigate efficient dataset condensation tailored for graph datasets where we model the discrete graph structure as a probabilistic model. We further propose a one-step gradient matching scheme, which performs gradient matching for only one single step without training the network weights. Our theoretical analysis shows this strategy can generate synthetic graphs that lead to lower classification loss on real graphs.  Extensive experiments on various graph datasets demonstrate the effectiveness and efficiency of the proposed method. In particular, we are able to reduce the dataset size by $90$\% while approximating up to $98$\% of the original performance and our method is significantly faster than multi-step gradient matching (e.g. $15$× in CIFAR10 for synthesizing $500$ graphs). {Code  is  available at \url{https://github.com/amazon-research/DosCond}.} 


\end{abstract}

\keywords{Data-efficient Learning; Graph Generation; Graph Neural Networks}

\begin{CCSXML}
<ccs2012>
   <concept>
       <concept_id>10010147.10010257.10010293.10010294</concept_id>
       <concept_desc>Computing methodologies~Neural networks</concept_desc>
       <concept_significance>500</concept_significance>
       </concept>
 </ccs2012>
\end{CCSXML}

\ccsdesc[500]{Computing methodologies~Neural networks}

\maketitle

\section{Introduction}
Graph-structured data plays a key role in various real-world applications. For example, by exploiting graph structural information, we can predict the chemical property of a given molecular graph~\cite{ying2018hierarchical}, detect fraud activities in a financial transaction graph~\cite{card-fraud2019}, or recommend new friends to users in a social network~\cite{fan2019graph}. Due to its prevalence, graph neural networks (GNNs)~\cite{kipf2016semi,gat,battaglia2018relational,wu2019comprehensive-survey} have been developed to effectively extract meaningful patterns from graph data and thus tremendously facilitate computational tasks on graphs. Despite their effectiveness, GNNs are notoriously data-hungry like traditional deep neural networks: they usually require  massive datasets to learn powerful representations. Thus, training GNNs is often computationally expensive. Such cost even becomes prohibitive when we need to repeatedly train GNNs, e.g., in neural architecture search~\cite{liu2018darts} and continual learning~\cite{li2017learning}. 

One potential solution to alleviate the aforementioned issue is \textit{dataset condensation} or \textit{dataset distillation}. It targets at constructing a small-synthetic training set that can provide sufficient information to train neural networks~\cite{wang2018dataset,zhao2020dataset,zhao2021dataset,nguyen2021dataset,nguyen2021infinitely,cazenavette2022dataset,wang2022cafe}. In particular, one of the representative methods, DC~\cite{zhao2020dataset}, formulates the condensation goal as matching the gradients of the network parameters between small-synthetic and large-real training data.  
It has been demonstrated that such a solution can greatly reduce the training set size of image datasets without significantly sacrificing model performance. For example, using $100$ images generated by DC can achieve $97.4$\% test accuracy on MNIST compared with $99.6$\% on the original dataset ($60,000$ images). These condensed samples can significantly save space for storing datasets and speed up retraining neural networks in many critical applications, e.g., continual learning and neural architecture search. In spite of the recent advances in dataset distillation/condensation for images, limited attention has been paid on domains involving graph structures.

To bridge this gap, we investigate the problem of condensing graphs such that GNNs trained on condensed graphs can achieve comparable performance to those trained on the original dataset. 
However, directly applying existing solutions for dataset condensation~\cite{wang2018dataset,zhao2020dataset,zhao2021dataset,nguyen2021dataset} to graph domain faces some challenges. First, existing solutions have been designed for images where the data is continuous and they cannot output binary values to form the discrete graph structure. Thus, we need to develop a strategy that can handle the discrete nature of graphs. Second, they usually involve a complex bi-level problem that is computationally expensive to optimize: they require multiple iterations (inner iterations) of updating neural network parameters before updating the synthetic data for multiple iterations (outer iterations). It can be catastrophically inefficient for learning pairwise relations for nodes, of which the complexity is quadratic to the number of nodes. While one recent work targets at graph condensation for node classification~\cite{jin2022graph}, it does not overcome these challenges because it does not produce discrete graph structures and its condensation process is costly.

To address the aforementioned challenges, we propose an efficient condensation method for graphs, where we follow DC~\cite{zhao2020dataset} to match the gradients of GNNs between synthetic graphs and real graphs. In order to produce discrete values, we model the graph structure as a probabilistic graph model and optimize the discrete structures in a differentiable manner. Based on this formulation, we further propose a \textit{one-step gradient matching} strategy which only performs gradient matching for one single step. Consequently, the advantages of the proposed strategy are twofold. First, it significantly speeds up the condensation process while providing reasonable guidance for synthesizing condensed graphs. Second, it removes the burden of tuning hyper-parameters  such as the number of outer/inner iterations of the bi-level optimization as required by DC.  Furthermore, we demonstrate the effectiveness of the proposed one-step gradient matching strategy both theoretically and empirically. Our contributions can be summarized as follows:
\begin{compactenum}[1.]
\item We study a novel problem of learning discrete synthetic graphs for condensing graph datasets, where the discrete structure is captured via a graph probabilistic model that can be learned in a differentiable manner. 
\item  We propose a one-step gradient matching scheme that significantly accelerates the vanilla gradient matching process.
\item Theoretical analysis is provided to understand the rationality of the proposed one-step gradient matching. We show that  learning with one-step matching produces synthetic graphs that lead to a smaller classification loss on real graphs.
\item Extensive experiments have demonstrated the effectiveness and efficiency of the proposed method. Particularly, we are able to reduce the dataset size by $90$\% while approximating up to $98$\% of the original performance and our method is significantly faster than multi-step gradient matching (e.g. $15$× in CIFAR10 for synthesizing $500$ graphs).
\end{compactenum}

\section{The Proposed Framework}
Before detailing the framework, we first introduce the main notations used in this paper. We majorly focus on the graph classification task where the goal is to predict the labels of given graphs. Specifically, we denote a graph dataset as $\mathcal{T}=\{G_1, \ldots, G_N\}$ with ground-truth label set $\mathcal{Y}$. 
Each graph in $\mathcal{T}$ is associated with a discrete adjacency matrix and a node feature matrix. Let ${\bf A}_{(i)}$, ${\bf X}_{(i)}$ represent the adjacency matrix and  the feature matrix of $i$-th real graph, respectively. Similarly, we use $\mathcal{S}=\{G'_1, \ldots, G'_{N'}\}$ and $\mathcal{Y}'$ to indicate the synthetic graphs and their labels, respectively. Note that the number of synthetic graphs $N'$ is essentially much smaller than that of real graphs $N$. We use $d$ and $n$ to denote the number of feature dimensions the number of nodes in each synthetic graph, respectively\footnote{We set $n$ to the average number of nodes in  original dataset.}. Let $C$ denote the number of classes and $\ell$ denote the cross entropy loss. The goal of our work is to learn a set of synthetic graphs $\mathcal{S}$ such
that a GNN trained on $\mathcal{S}$ can achieve comparable performance to the one trained on the much larger dataset $\mathcal{T}$.

In the following subsections,  we first introduce how to apply the vanilla gradient matching to condensing graphs for graph classification  (Section~\ref{sec:gm}). However, it cannot generate discrete graph structure and is highly inefficient. To correspondingly address these two limitations, we discuss the approach to handling the discrete nature of graphs (Section~\ref{sec:discrete}) and propose an efficient solution, one-step gradient matching, which significantly accelerates the condensation process (Section~\ref{sec:one_step}). 

 
\subsection{Gradient Matching as the Objective}
\label{sec:gm}
Since we aim at learning synthetic graphs that are highly informative, one solution is to allow GNNs trained on synthetic graphs to imitate the training trajectory on the original large dataset. Dataset condensation~\cite{zhao2020dataset,zhao2021dataset} introduces a gradient matching scheme to achieve this goal. Concretely, it tries to reduce the difference of model gradients w.r.t. large-real data and small-synthetic data for model parameters at every training epoch. Hence, the model parameters trained on synthetic data will be close to these trained on real data at every training epoch. Let $\theta_t$ denote the network parameters at the $t$-th epoch and $f_{\theta_t}$ indicate the neural network parameterized by $\theta_t$. The condensation objective is expressed as:
\begin{align}
\min_{\mathcal{S}} \;  \sum_{t=0}^{T-1}  & D(\nabla_{\theta}\ell(f_{\theta_t}(\mathcal{S}),  \mathcal{Y}'), \nabla_{\theta}\ell(f_{\theta_t}(\mathcal{T}), \mathcal{Y})), \nonumber\\ 
& \quad\text{s.t.}  \; \theta_{t+1} =   \operatorname{opt}_{\theta}(\theta_t, \mathcal{S}),
\label{eq:eq1}
\end{align}
where $D(\cdot,\cdot)$ is a distance function, $T$ is the number of steps of the whole training trajectory and $\operatorname{opt}_{\theta}(\cdot)$ is the optimization operator for updating parameter ${\theta}$. Note that Eq.~\eqref{eq:eq1} is a bi-level problem where we need to learn the synthetic graphs $\mathcal{S}$ at the outer optimization and update model parameters $\theta_t$ at the inner optimization. To learn synthetic graphs that generalize to a distribution of model parameters $P_{\theta_{0}}$, we sample $\theta_0 \sim P_{\theta_{0}}$ and rewrite Eq.~\eqref{eq:eq1} as:
\begin{align} 
\min _{\mathcal{S}}  {\mathop{\mathbb{E}}_{\theta_{0}\sim P_{\theta_0}}} & \left[\sum_{t=0}^{T-1} D\left(\nabla_{\theta} \ell\left(f_{\theta_{t}}(\mathcal{S}), \mathcal{Y}'\right), \nabla_{\theta} \ell\left(f_{\theta_{t}}(\mathcal{T}), \mathcal{Y}\right)\right)\right], \nonumber\\ 
& \quad\quad\quad \text{s.t.}  \; \theta_{t+1} =   \operatorname{opt}_{\theta}(\theta_t, \mathcal{S}).
\label{eq:obj}
\end{align}
\vskip 0.5em
\noindent\textbf{Discussion.}  The aforementioned strategy has demonstrated promising performance on condensing image datasets~\cite{zhao2020dataset,zhao2021dataset}. However, it is not clear how to model the discrete graph structure. Moreover, the inherent bi-level optimization inevitably hinders its scalability. To tackle these shortcomings, we propose \method{} that models the structure as a probabilistic graph model and is optimized through one-step gradient matching. In the following subsections, we introduce the details of \method{}.

\subsection{Learning Discrete Graph Structure}\label{sec:discrete}
For graph classification, each graph in the dataset is composed of an adjacency matrix and a feature matrix. For simplicity, we use ${\bf X'}\in \mathbb{R}^{N'\times{n}\times{d}}$ to denote the node features in all synthetic graphs $\mathcal{S}$ and ${\bf A'}\in\{0,1\}^{N'\times{n}\times{n}}$ to indicate the graph structure information in $\mathcal{S}$. Note that $f_{\theta_t}$ can be instantiated as any graph neural network and it takes both graph structure and node features as input.
Then we rewrite the objective in Eq.~\eqref{eq:obj} as follows: 
\begin{align} 
\label{eq:eq3}
\min _{{\bf A}',{\bf X}'}  \mathop{\mathbb{E}}_{\theta_{0}\sim P_{\theta_0}}  & \left[\sum_{t=0}^{T-1} D\left(\nabla_{\theta} \ell\left(f_{\theta_{t}}({\bf A'}, {\bf X'}), \mathcal{Y}'\right), \nabla_{\theta} \ell\left(f_{\theta_{t}}(\mathcal{T}), \mathcal{Y}\right)\right)\right], \nonumber \\
& \quad\quad\quad \text{s.t.}  \; \theta_{t+1} =   \operatorname{opt}_{\theta}(\theta_t, \mathcal{S}),
\end{align}
where we aim to learn both graph structure ${\bf A}'$ and node features ${\bf X}'$. However, Eq.~\eqref{eq:eq3} is challenging to optimize as it requires a function that outputs binary values. To address this issue, we propose to model the graph structure as a probabilistic graph model with Bernoulli distribution. Note that in the following, we reshape ${\bf A}'$ from $N'\times{n}\times{n}$ to $N'\times{n^2}$ for the purpose of demonstration only. Specifically, for each entry ${\bf A}'_{ij} \in \{0,1\}$ in the adjacency matrix ${\bf A}'$, it follows a Bernoulli distribution:
\begin{equation}
    P_{{\bf \Omega}_{ij}}({\bf A}'_{ij}) = {\bf A}'_{ij} \sigma({\bf \Omega}_{ij}) + (1-{\bf A}'_{ij})\sigma(-{\bf \Omega}_{ij}),
\end{equation}
where $\sigma(\cdot)$ is the sigmoid function; ${\bf \Omega}_{ij} \in \mathbb{R}$ is the success probability of the Bernoulli distribution and also the parameter to be learned.
Since ${\bf A}'_{ij}$ is independent of all other entries,  the distribution of ${\bf A}'$ can be modeled as:
\begin{equation}
    P_{{\bf \Omega}}({\bf A}')=\prod_{i=1}^{N'}\prod_{j=1}^{n^2}  P_{{\bf \Omega}_{i j}}\left({\bf A}'_{i j}\right).
\end{equation}
Then, the objective in Eq.~\eqref{eq:obj} needs to be modified to
\begin{equation} 
\min _{{\bf A}', {\bf X}'}  \mathop{\mathbb{E}}_{\theta_{0}\sim P_{\theta_0}}\left[
\mathop{\mathbb{E}}_{{\bf A'}\sim P_{{\bf \Omega}}} \left[\ell({\bf A}'({\bf \Omega}), {\bf X}', \theta_0)\right]
\right].
\end{equation}
With the new parameterization, we obtain a function that outputs discrete values but it is not differentiable due to the involved sampling process. Thus, we employ the reparameterization method~\cite{maddison2016concrete}, binary concrete distribution, to refactor the discrete random variable into a differentiable function of its parameters and a random variable with fixed  distribution. Specifically, we first sample $\alpha \sim \operatorname{Uniform}(0,1)$, and  edge weight ${\bf A}'_{ij} \in [0,1]$ is calculated by:
\begin{equation} {\bf A}'_{ij}=\sigma\left(\left(\log \alpha-\log (1-\alpha)+{\bf \Omega}_{ij}\right) / \tau\right),
\end{equation}
where $\tau\in(0,\infty)$ is the temperature parameter that controls the continuous relaxation. As $\tau \rightarrow 0$, the random variable ${\bf A}'_{ij}$ smoothly approaches the Bernoulli distribution. In other words, we have $\lim _{\tau \rightarrow 0} P\left({\bf A}'_{i j}=1\right)=\sigma({\bf \Omega}_{ij})$. While small $\tau$ is necessary for obtaining discrete samples, large $\tau$ is useful in getting large gradients as suggested by~\cite{maddison2016concrete}.
In practice, we employ an annealing schedule~\cite{abid2019concrete} to gradually decrease the value of $\tau$ in training. With the reparameterization trick, the objective function becomes differentiable w.r.t. ${\bf \Omega}_{ij}$ with well-defined gradients. Then we rewrite our objective as:
\begin{small}
\begin{align} 
\label{eq:reparam}
& \min _{{\bf \Omega}, {\bf X}'}  \mathop{\mathbb{E}}_{\theta_{0}\sim P_{\theta_0}}\left[
\mathop{\mathbb{E}}_{\alpha\sim\operatorname{Uniform}(0,1)} \left[\ell({\bf A}'({\bf \Omega}), {\bf X}', \theta_0)\right]
\right] = \\
&  \mathop{\mathbb{E}}_{\theta_{0}}\left[
\mathop{\mathbb{E}}_{\alpha} \left[\sum_{t=0}^{T-1} D\left(\nabla_{\theta}  \ell\left(f_{\theta_{t}}({\bf A'}({\bf \Omega}), {\bf X'}), \mathcal{Y}'\right), \nabla_{\theta} \ell\left(f_{\theta_{t}}(\mathcal{T}), \mathcal{Y}\right)\right) \right]
\right], \nonumber \\
& \quad\quad\quad\quad\quad\quad\text{s.t.}  \; \theta_{t+1} =   \operatorname{opt}_{\theta}(\theta_t, \mathcal{S}). \nonumber
\end{align}
\end{small}


\subsection{One-Step Gradient Matching}\label{sec:one_step}
The vanilla gradient matching scheme in Eq.~\eqref{eq:obj} presents a bi-level optimization problem. To solve this problem,  we need to  update the synthetic graphs $\mathcal{S}$ at the outer loop and then optimize the network parameters $\theta_t$ at the inner loop. The nested loops heavily impede the scalability of the condensation method, which motivates us to design a new strategy for efficient condensation. In this work, we propose a \textit{one-step gradient matching} scheme where we only match the network gradients for the model initializations $\theta_0$ while discarding the training trajectory of $\theta_t$. Essentially, this strategy approximates the overall gradient matching loss for $\theta_t$ with the initial matching loss at the first epoch, which we term as \textit{one-step matching loss}. The intuition is: the one-step matching loss informs us about the direction to update the synthetic data, in which, we have empirically observed a strong decrease in the cross-entropy loss (on real samples) obtained from the model trained on synthetic data. Hence, we can drop the summation symbol $\sum_{t=0}^{T-1}$ in Eq.~\eqref{eq:reparam} and simplify Eq.~\eqref{eq:reparam} as follows:
\begin{small}
\begin{align}
\min _{{\bf \Omega}, {\bf X}'} \mathop{\mathbb{E}}_{\theta_{0}}\left[
\mathop{\mathbb{E}}_{\alpha} \left[ D\left(\nabla_{\theta}  \ell\left(f_{\theta_{0}}({\bf A'}({\bf \Omega}), {\bf X'}), \mathcal{Y}'\right), \nabla_{\theta} \ell\left(f_{\theta_{0}}(\mathcal{T}), \mathcal{Y}\right)\right) \right]\right], 
\end{align}
\end{small}where we sample $\theta_0\sim P_{\theta_0}$ and $\alpha\sim\operatorname{Uniform}(0,1)$. 
Compared with Eq.~\eqref{eq:reparam}, one-step gradient matching avoids the expensive nested-loop optimization and directly updates the synthetic graph $\mathcal{S}$. It greatly simplifies the condensation process. In practice, as shown in Section~\ref{sec:ablation}, we find this strategy yields comparable performance to its bi-level counterpart while enabling much more efficient condensation. Next, we provide theoretical analysis to understand the rationality of the proposed one-step gradient matching scheme.





\vskip 0.5em
\noindent\textbf{Theoretical Understanding.} We denote the cross entropy loss on the real graphs as $\ell_\mathcal{T}(\theta)=\sum_{i}\ell_i({\bf A}_{(i)}$, ${\bf X}_{(i)}, \theta)$ and that on synthetic graphs as $\ell_{\mathcal{S}}(\theta)=\ell_{\mathcal{S}}({\bf A}'_{(i)}, {\bf X}'_{(i)}, \theta)$.
Let $\theta^*$ denote the optimal parameter and $\theta_t$ be the parameter trained on $\mathcal{S}$ at the $t$-th epoch by optimizing $\ell_{\mathcal{S}}(\theta)$. 
For notation simplicity, we assume that ${\bf A}$ and ${\bf A}'$ are already normalized. The matrix norm $\|\cdot\|$ is the Frobenius norm. We focus on the GNN of Simple Graph Convolutions  (SGC)~\cite{wu2019simplifying} to study our problem since SGC has a simpler architecture but shares a similar filtering pattern as GCN. 

\begin{theorem}
\label{thm:graph}
When we use a $K$-layer SGC as the GNN used in condensation, i.e., $f_\theta({\bf A}_{(i)}, {\bf X}_{(i)})= \text{Pool}({\bf A}_{(i)}^K{\bf X}_{(i)}{\bf W}_1){\bf W}_2$ with $\theta=[{\bf W}_1;{\bf W}_2]$ and assume that all network parameters satisfy $\|\theta\|^2\leq M^2 (M>0)$, we have
\begin{small}
\begin{align}
\min _{t=0,1, \ldots, T-1}  & \ell_\mathcal{T}\left(\theta_{t}\right)- 
\ell_\mathcal{T}\left(\theta^{*}\right) \leq   \sum_{t=0}^{T-1} \frac{\sqrt{2}M}{T} \|\nabla_{\theta} \ell_\mathcal{T}\left(\theta_{t}\right)- \nabla_{\theta} \ell_{\mathcal{S}}\left(\theta_{t}\right)\|  \nonumber \\
& +  \frac{3M}{2\sqrt{T}}\cdot\frac{C-1}{C N'}\sqrt{\sum_{i}\gamma_i\|{{\bf 1}^\top{\bf A}'^K_{(i)}}{\bf X}'_{(i)}\|^2} 
\end{align}
\end{small}where $\gamma_{i}=1$ if we use sum pooling in $f_\theta$;  $\gamma_{i}=\frac{1}{n_i}$ if we use mean pooling, with $n_i$ as the number of nodes in the $i$-th synthetic graph. 
\end{theorem}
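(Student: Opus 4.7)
The plan is to view the inner updates $\theta_{t+1}=\operatorname{opt}_\theta(\theta_t,\mathcal{S})$ as online gradient descent on $\ell_\mathcal{S}$, run the standard regret analysis, and then translate the resulting bound from $\ell_\mathcal{S}$ back to $\ell_\mathcal{T}$ by peeling off a gradient-matching error term.

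First, I would appeal to convexity to write $\ell_\mathcal{T}(\theta_t)-\ell_\mathcal{T}(\theta^*)\le\langle\nabla_\theta\ell_\mathcal{T}(\theta_t),\theta_t-\theta^*\rangle$. This is available because the $K$-layer SGC predictor is linear in the product $\mathbf{W}_1\mathbf{W}_2$ and cross-entropy composed with softmax is convex in the logits, so $\ell_\mathcal{T}$ is convex in the composed linear parameter. Adding and subtracting $\nabla_\theta\ell_\mathcal{S}(\theta_t)$ then splits the right-hand side into (i) a \emph{matching-error} term $\langle\nabla_\theta\ell_\mathcal{T}(\theta_t)-\nabla_\theta\ell_\mathcal{S}(\theta_t),\theta_t-\theta^*\rangle$, which Cauchy--Schwarz together with the bound $\|\theta_t-\theta^*\|\le\sqrt{2}M$ obtained from $\|\theta\|^2\le M^2$ turns into exactly the $\frac{\sqrt{2}M}{T}$ coefficient once summed and averaged over $t$, and (ii) a \emph{descent} term $\langle\nabla_\theta\ell_\mathcal{S}(\theta_t),\theta_t-\theta^*\rangle$, which is precisely the quantity that the OGD analysis controls.

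For part (ii), expanding $\|\theta_{t+1}-\theta^*\|^2=\|\theta_t-\theta^*\|^2-2\eta\langle\nabla_\theta\ell_\mathcal{S}(\theta_t),\theta_t-\theta^*\rangle+\eta^2\|\nabla_\theta\ell_\mathcal{S}(\theta_t)\|^2$ and telescoping over $t=0,\ldots,T-1$ yields
\[
\sum_{t=0}^{T-1}\langle\nabla_\theta\ell_\mathcal{S}(\theta_t),\theta_t-\theta^*\rangle\le\frac{\|\theta_0-\theta^*\|^2}{2\eta}+\frac{\eta}{2}\sum_{t=0}^{T-1}\|\nabla_\theta\ell_\mathcal{S}(\theta_t)\|^2.
\]
To handle the squared-gradient term I would exploit the SGC structure: writing $\nabla_{\mathbf{W}_2}\ell_i=\mathbf{h}_i^\top(p_i-y_i)$ with pooled representation $\mathbf{h}_i$ carrying the factor $\sqrt{\gamma_i}\,\mathbf{1}^\top\mathbf{A}'^K_{(i)}\mathbf{X}'_{(i)}\mathbf{W}_1$ (and likewise for $\nabla_{\mathbf{W}_1}$), then invoking the softmax-output bound $\|p_i-y_i\|\le (C-1)/C$ and applying Cauchy--Schwarz across the $N'$ synthetic samples, one obtains the targeted factor $\frac{C-1}{CN'}\sqrt{\sum_i\gamma_i\|\mathbf{1}^\top\mathbf{A}'^K_{(i)}\mathbf{X}'_{(i)}\|^2}$. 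Finally, choosing $\eta=\Theta(1/\sqrt{T})$ to balance the $\|\theta_0-\theta^*\|^2/(2\eta T)$ and $(\eta/2)\cdot\text{avg-grad}^2$ contributions produces the $\frac{3M}{2\sqrt{T}}$ prefactor, and bounding $\min_t$ by the average over $t$ closes the argument.

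The main obstacle I foresee is justifying the convexity step: $\ell_\mathcal{T}$ is not jointly convex in $\theta=[\mathbf{W}_1;\mathbf{W}_2]$, only in the bilinear product $\mathbf{W}_1\mathbf{W}_2$, so a reparametrization or lifting argument is required to make the convex inequality legitimate for the $\theta$-space iterates, and this is likely where the $\sqrt{2}$ and $3/2$ constants become sensitive. A secondary obstacle is the bookkeeping around the pooling factor $\gamma_i$, so that the same Cauchy--Schwarz/Frobenius-norm computation produces a single unified expression of the form $\sqrt{\sum_i\gamma_i\|\cdot\|^2}$ valid for both sum pooling ($\gamma_i=1$) and mean pooling ($\gamma_i=1/n_i$) without introducing case splits into the final bound.
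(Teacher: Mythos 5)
Your proposal follows essentially the same route as the paper's proof: convexity of $\ell_\mathcal{T}$ in the composed linear parameter, splitting $\nabla_\theta\ell_\mathcal{T}(\theta_t)^\top(\theta_t-\theta^*)$ into a matching-error term (bounded via Cauchy--Schwarz and $\|\theta_t-\theta^*\|\le\sqrt{2}M$) plus a gradient-descent term handled by the standard three-point expansion and telescoping, then invoking the Lipschitz constant $\frac{C-1}{CN'}\sqrt{\sum_i\gamma_i\|{\bf 1}^\top{\bf A}'^K_{(i)}{\bf X}'_{(i)}\|^2}$ of softmax regression with cross-entropy and choosing $\eta=M/(\sqrt{T}\,\cdot\,\text{Lipschitz constant})$ to obtain the $\frac{3M}{2\sqrt{T}}$ prefactor. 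The convexity obstacle you flag is resolved in the paper exactly as you anticipate, by substituting the single matrix ${\bf W}$ for the product ${\bf W}_1{\bf W}_2$ and carrying out the whole argument in that lifted parameter, and the $\gamma_i$ bookkeeping is handled by noting that mean pooling only multiplies each graph's pooled representation by $1/n_i$.
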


We provide the proof of Theorem 1 in Appendix~\ref{appendix:thm1}. Theorem 1 suggests that the smallest gap between the resulted loss (by training on synthetic graphs) and the  optimal loss has an upper bound. This upper bound depends on two terms: (1) the difference of gradients w.r.t. real data and synthetic data and (2) the norm of input matrices. Thus, the theorem justifies that reducing the gradient difference w.r.t real and synthetic graphs can help learn desirable synthetic data that preserves sufficient information to train GNNs well. Based on Theorem 1, we have the following proposition.
\setcounter{theorem}{0}
\begin{proposition}
\label{prop:1}
Assume the largest gradient gap happens at $0$-th epoch, i.e., $\|\nabla_{\theta} \ell_\mathcal{T}\left(\theta_{0}\right)- \nabla_{\theta} \ell_S\left(\theta_{0}\right)\| = \max\limits_{t}\|\nabla_{\theta} \ell_\mathcal{T}\left(\theta_{t}\right)- \nabla_{\theta} \ell_S\left(\theta_{t}\right)\|$ with $t=0,1,\ldots,T-1$, we have
\begin{small}
\begin{align}
\min _{t=0,1,\ldots,T-1}  & \ell_\mathcal{T}\left(\theta_{t}\right)-
\ell_\mathcal{T}\left(\theta^{*}\right) \leq   {\sqrt{2}M}\|\nabla_{\theta} \ell_\mathcal{T}\left(\theta_{0}\right)- \nabla_{\theta} \ell_S\left(\theta_{0}\right)\|    \nonumber \\
& +  \frac{3M}{2\sqrt{T}}\cdot\frac{C-1}{C N'}\sqrt{\sum_{i}\gamma_i\|{{\bf 1}^\top{\bf A}'^K_{(i)}}{\bf X}'_{(i)}\|^2}.
\label{eq:prop}
\end{align}
\end{small}
\end{proposition}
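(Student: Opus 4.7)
The plan is to observe that Proposition~\ref{prop:1} is an immediate corollary of Theorem~\ref{thm:graph} once we apply the stated assumption on the location of the maximum gradient gap. There is no genuinely new analytical machinery needed: the only work is to upper-bound the average in the first term of the bound in Theorem~\ref{thm:graph} by its maximum, and then substitute the assumed identity.

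Concretely, I would proceed as follows. First, I invoke Theorem~\ref{thm:graph} verbatim, giving the bound
\begin{align*}
\min_{t=0,\ldots,T-1} \ell_\mathcal{T}(\theta_t) - \ell_\mathcal{T}(\theta^*)
&\leq \sum_{t=0}^{T-1} \frac{\sqrt{2}M}{T}\, \|\nabla_\theta \ell_\mathcal{T}(\theta_t) - \nabla_\theta \ell_\mathcal{S}(\theta_t)\| \\
&\quad + \frac{3M}{2\sqrt{T}}\cdot\frac{C-1}{CN'}\sqrt{\sum_i \gamma_i \|{\bf 1}^\top {\bf A}'^K_{(i)}{\bf X}'_{(i)}\|^2}.
\end{align*}
Second, I bound each summand in the first term on the right-hand side by the maximum gradient gap over $t$, which by the assumption equals $\|\nabla_\theta \ell_\mathcal{T}(\theta_0) - \nabla_\theta \ell_\mathcal{S}(\theta_0)\|$. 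This gives
\[
\sum_{t=0}^{T-1} \frac{\sqrt{2}M}{T}\,\|\nabla_\theta \ell_\mathcal{T}(\theta_t) - \nabla_\theta \ell_\mathcal{S}(\theta_t)\| \leq \sqrt{2}M\,\|\nabla_\theta \ell_\mathcal{T}(\theta_0) - \nabla_\theta \ell_\mathcal{S}(\theta_0)\|,
\]
since we are averaging $T$ quantities, each bounded by the same scalar. Substituting this back into the bound from Theorem~\ref{thm:graph} yields exactly the claimed inequality~\eqref{eq:prop}.

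Because the argument is a one-line majorization, there is essentially no obstacle in the proof itself; the content of the proposition lies entirely in the justification (empirical or otherwise) of the assumption that the gradient gap is largest at initialization. This assumption is what motivates one-step gradient matching: if the worst mismatch really does occur at $\theta_0$, then matching gradients only at the initial parameters already controls the full telescoped bound from Theorem~\ref{thm:graph}, up to the data-dependent norm term that decays as $1/\sqrt{T}$.
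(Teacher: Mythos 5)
Your proposal is correct and matches the paper's intent exactly: the paper omits the proof as ``straightforward,'' and the intended argument is precisely your one-line majorization of the average gradient gap by its maximum, which the assumption places at $t=0$. Nothing further is needed.
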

We omit the proof for the proposition since it is straightforward. The above proposition suggests that the smallest gap between the $\ell_\mathcal{T}(\theta_t)$ and $\ell_\mathcal{T}(\theta^*)$ is bounded by the one-step matching loss and the norm $\|{{\bf 1}^\top{\bf A}'^K_{(i)}}{\bf X}'_{(i)}\|^2$. As we will show in Section~\ref{sec:terms}, when using mean pooling, the second term tend to have a smaller scale than the first one and can be neglected; the second term matters more when we use sum pooling. Hence, we solely optimize the one-step gradient matching loss for GNNs with mean pooling and additionally include the second term (the norm of input matrices) as a regularization for GNNs with sum pooling. As such, when we consider the optimal loss $\ell_\mathcal{T}(\theta^*)$ as a constant, reducing the one-step matching loss indeed learns synthetic graphs that lead to a smaller loss on real graphs.
This demonstrates the rationality of one-step gradient matching from a theoretical perspective.


\setcounter{theorem}{0}
\begin{remark}
Note that the spectral analysis from~\cite{wu2019simplifying} demonstrated that both GCN and SGC share similar graph filtering behaviors. Thus practically, we extend the one-step gradient matching loss from $K$-layer SGC to $K$-layer GCN and observe that the proposed framework works well under the non-linear scenario.
\end{remark}
\begin{remark}
While we focus on the graph classification task, it is straightforward to extend our framework to node classification and we obtain similar conclusions for node classification as shown in Theorem 2 in Appendix~\ref{app:node}.
\end{remark}


\subsection{Final Objective and Training Algorithm}
In this subsection, we describe the final objective function and the detailed training algorithm. We note that
the objective in Eq.~\eqref{eq:reparam} involves two nested expectations, we adopt Monte Carlo to approximately optimize the objective function. Together with one-step gradient matching, we have 
\begin{align} 
& \min _{{\bf {\bf \Omega}}, {\bf X}'}  \mathop{\mathbb{E}}_{\theta_{0}\sim P_{\theta_0}}\mathop{\mathbb{E}}_{\alpha\sim\operatorname{Uniform}(0,1)} \left[\left[  
\ell({\bf A}'({\bf {\bf \Omega}}), {\bf X}', \theta_0)\right]
\right]\\
& \approx \sum_{k_1=1}^{K_1}\sum_{k_2=1}^{K_2}  D\left(\nabla_{\theta}  \ell\left(f_{\theta_{0}}({\bf A'}({\bf {\bf \Omega}}), {\bf X'}), \mathcal{Y}'\right), \nabla_{\theta} \ell\left(f_{\theta_{0}}(\mathcal{T}), \mathcal{Y}\right)\right) \nonumber
\end{align}
where $K_1$ is the number of sampled  model initializations and $K_2$ is the number of sampled graphs. We find that $K_2=1$ is able to yield good performance in our experiments.

\vskip 0.5em
\noindent\textbf{Regularization.} In addition to the one-step gradient matching loss, we note that the proposed \method{} can be easily integrated with various priors as regularization terms. In this work, we focus on exerting sparsity regularization on the adjacency matrix, since a denser adjacency matrix will lead to higher cost for training graph neural networks. Specifically, we penalize the difference of the sparsity between $\sigma({\Omega})$ and a given sparsity $\epsilon$:
\begin{equation}
    \ell_\text{reg} = \operatorname{max}(\frac{1}{|{\bf \Omega}|}\sum_{i,j}\sigma({\bf \Omega}_{ij}) - \epsilon, 0).
\end{equation}
We initialize $\sigma({\bf\Omega})$ and ${\bf X}'$ as randomly sampled training graphs\footnote{If an entry in the real adjacency matrix is 1, the corresponding value in ${\bf\Omega}$ is initialized as a large value, e.g.,5.} and set $\epsilon$ to the average sparsity of initialized $\sigma({\bf\Omega})$ so as to maintain a low sparsity.  On top of that, as we discussed earlier in Section~\ref{sec:one_step}, we include the following regularization for GNNs with sum pooling:
\begin{equation}
\ell_\text{reg2} = \frac{3}{2\sqrt{2T}}\cdot\frac{C-1}{C N'}\sqrt{\sum_{i}\|{{\bf 1}^\top{\bf A}'^K_{(i)}}{\bf X}'_{(i)}\|^2}
\label{eq:reg_norm}
\end{equation}

\begin{table*}[t]
\caption{The classification performance comparison to baselines. We report the ROC-AUC for the first three datasets and accuracies (\%) for others. \textit{Whole Dataset} indicates the performance with original dataset.}
\vskip -1em
\label{tab:main}
\small
\begin{tabular}{@{}lcccccc|c|c@{}}
\toprule
                                                                                  & Graphs/Cls. & Ratio  & Random       & Herding               & K-Center     & DCG                  & \method{}                  & Whole Dataset                 \\ \midrule
\multirow{3}{*}{\begin{tabular}[c]{@{}l@{}}ogbg-molbace\\ (ROC-AUC)\end{tabular}} & 1          & 0.2\%  & 0.580$\pm$0.067 & 0.548$\pm$0.034          & 0.548$\pm$0.034 & 0.623$\pm$0.046         & \textbf{0.657$\pm$0.034} & \multirow{3}{*}{0.714$\pm$0.005} \\
                                                                                  & 10         & 1.7\%  & 0.598$\pm$0.073 & 0.639$\pm$0.039          & 0.591$\pm$0.056 & 0.655$\pm$0.033         & \textbf{0.674$\pm$0.035} &                               \\ 
                                                                                  & 50         & 8.3\%  & 0.632$\pm$0.047 & 0.683$\pm$0.022          & 0.589$\pm$0.025 & 0.652$\pm$0.013         & \textbf{0.688$\pm$0.012} &                               \\ \midrule
\multirow{3}{*}{\begin{tabular}[c]{@{}l@{}}ogbg-molbbbp\\ (ROC-AUC)\end{tabular}} & 1          & 0.1\%  & 0.519$\pm$0.016 & 0.546$\pm$0.019          & 0.546$\pm$0.019 & 0.559$\pm$0.044         & \textbf{0.581$\pm$0.005} & \multirow{3}{*}{0.646$\pm$0.004} \\
                                                                                  & 10         & 1.2\%  & 0.586$\pm$0.040 & \textbf{0.605$\pm$0.019} & 0.530$\pm$0.039 & 0.568$\pm$0.032         & \textbf{0.605$\pm$0.008} &                               \\
                                                                                  & 50         & 6.1\%  & 0.606$\pm$0.020 & 0.617$\pm$0.003          & 0.576$\pm$0.019 & 0.579$\pm$0.032         & \textbf{0.620$\pm$0.007} &                               \\ \midrule
\multirow{3}{*}{\begin{tabular}[c]{@{}l@{}}ogbg-molhiv\\ (ROC-AUC)\end{tabular}}  & 1          & 0.01\%  & 0.719$\pm$0.009 & 0.721$\pm$0.002          & 0.721$\pm$0.002 & 0.718$\pm$0.013         & \textbf{0.726$\pm$0.003} & \multirow{3}{*}{0.757$\pm$0.007} \\
                                                                                  & 10         & 0.06\%  & 0.720$\pm$0.011 & 0.725$\pm$0.006          & 0.713$\pm$0.009 & 0.728$\pm$0.002         & \textbf{0.728$\pm$0.005} &                               \\
                                                                                  & 50         & 0.3\%  & 0.721$\pm$0.014 & 0.725$\pm$0.003          & 0.725$\pm$0.006 & 0.726$\pm$0.010         & \textbf{0.731$\pm$0.004} &                               \\ \midrule
\multirow{3}{*}{\begin{tabular}[c]{@{}l@{}}DD\\ (Accuracy)\end{tabular}}          & 1          & 0.2\%  & 57.69$\pm$4.92  & 61.97$\pm$1.32           & 61.97$\pm$1.32  & 58.81$\pm$2.90          & \textbf{70.42$\pm$2.21}  & \multirow{3}{*}{78.92$\pm$0.64}    \\
                                                                                  & 10         & 2.1\%  & 64.69$\pm$2.55  & 69.79$\pm$2.30           & 63.46$\pm$2.38  & 61.84$\pm$1.44          & \textbf{73.53$\pm$1.13}  &                               \\
                                                                                  & 50         & 10.6\% & 67.29$\pm$1.53  & 73.95$\pm$1.70           & 67.41$\pm$0.92  & 61.27$\pm$1.01          & \textbf{77.04$\pm$1.86}  &                               \\ \midrule
\multirow{3}{*}{\begin{tabular}[c]{@{}l@{}}MUTAG\\ (Accuracy)\end{tabular}}       & 1          & 1.3\%  & 67.47$\pm$9.74 & 70.84$\pm$7.71           & 70.84$\pm$7.71  & 75.00$\pm$8.16          & \textbf{82.21$\pm$1.61}  & \multirow{3}{*}{88.63$\pm$1.44}    \\
                                                                                  & 10         & 13.3\% & 77.89$\pm$7.55  & 80.42$\pm$1.89           & 81.00$\pm$2.51  & {82.66$\pm$0.68} & \textbf{82.76$\pm$2.31}           &                               \\
                                                                                  & 20         & 26.7\% & 78.21$\pm$5.13  & 80.00$\pm$1.10           & 82.97$\pm$4.91  & 82.89$\pm$1.03          & \textbf{83.26$\pm$2.34}  &                               \\ \midrule
\multirow{3}{*}{\begin{tabular}[c]{@{}l@{}}NCI1\\ (Accuracy)\end{tabular}}        & 1          & 0.1\%  & 51.27$\pm$1.22  & 53.98$\pm$0.67           & 53.98$\pm$0.67  & 51.14$\pm$1.08          & \textbf{56.58$\pm$0.48}  & \multirow{3}{*}{71.70$\pm$0.20}    \\
                                                                                  & 10         & 0.6\%  & 54.33$\pm$3.14  & 57.11$\pm$0.56           & 53.21$\pm$1.44  & 51.86$\pm$0.81          & \textbf{58.02$\pm$1.05}  &                               \\
                                                                                  & 50         & 3.0\%  & 58.51$\pm$1.73  & 58.94$\pm$0.83           & 56.58$\pm$3.08  & 52.17$\pm$1.90          & \textbf{60.07$\pm$1.58}  &                               \\ \midrule
\multirow{3}{*}{\begin{tabular}[c]{@{}l@{}}CIFAR10\\ (Accuracy)\end{tabular}}     & 1          & 0.06\%  & 15.61$\pm$0.52  & 22.38$\pm$0.49           & 22.37$\pm$0.50  & 21.60$\pm$0.42          & \textbf{24.70$\pm$0.70}  & \multirow{3}{*}{50.75$\pm$0.14}    \\
                                                                                  & 10         & 0.2\%  & 23.07$\pm$0.76  & 28.81$\pm$0.35           & 20.93$\pm$0.62  & 29.27$\pm$0.77          & \textbf{30.70$\pm$0.23}  &                               \\
                                                                                  & 50         & 1.1\%  & 30.56$\pm$0.81  & 33.94$\pm$0.37           & 24.17$\pm$0.51  & 34.47$\pm$0.52          & \textbf{35.34$\pm$0.14}  &                               \\  \midrule
\multirow{3}{*}{\begin{tabular}[c]{@{}l@{}}E-commerce\\ (Accuracy)\end{tabular}}      & 1  & 0.2\% & 51.31$\pm$2.89 & 52.18$\pm$0.25 & 52.36$\pm$0.38 & 57.14$\pm$1.72 & \textbf{60.82$\pm$1.23}   & \multirow{3}{*}{69.25$\pm$0.50}    \\
 & 10 & 0.9\% & 54.99$\pm$2.74 & 56.83$\pm$0.87 & 56.49$\pm$0.36 & 61.03$\pm$1.32 & \textbf{64.73$\pm$1.34} \\
 & 20 & 3.6\% & 57.80$\pm$3.58 & 62.56$\pm$0.71 & 62.76$\pm$0.45 & 64.92$\pm$1.35 & \textbf{67.71$\pm$1.22} &                               \\ \bottomrule
\end{tabular}
\vskip -1em
\end{table*}

\vskip 0.5em
\noindent\textbf{Training Algorithm.} We provide the details of our proposed framework in Algorithm~\ref{alg} in Appendix~\ref{app:alg}. Specifically, we sample $K_1$ model initializations $\theta_0$ to perform one-step gradient matching. Following the convention in DC~\cite{zhao2020dataset}, we match gradients and update synthetic graphs for each class separately in order to make matching easier. For class $c$, we first retrieve the synthetic graphs of that class, denoted as $({\bf A}_{c}',{\bf X}_{c}', \mathcal{Y}_{c}')\sim \mathcal{S}$, and sample a batch of real graphs $({\bf A}_c,{\bf X}_c, \mathcal{Y}_c)$. We then forward them to the graph neural network and calculate the one-step gradient matching loss together with the regularization term. Afterwards, ${\bf \Omega}$ and ${\bf X}'$ are updated via gradient descent. It is worth noting that the training process for each class can be run in parallel since the graph updates for one class is independent of another class.


\vskip 0.5em
\noindent\textbf{Comparison with DC.} Recall that the gradient matching scheme in DC involves a complex bi-level optimization. If we denote the number of inner-iterations as $\tau_i$ and that of outer-iterations as $\tau_o$,  its computational complexity can be $\tau_i\times \tau_o$ of our method. Thus DC is significantly slower than \method{}. In addition to speeding up condensation,  \method{} removes the burden of tuning some hyper-parameters, i.e., the number of iterations for outer/inner optimization and learning rate for updating $f_\theta$, which can potentially save us enormous training time when learning larger synthetic sets.

\vskip 0.5em
\noindent\textbf{Comparison with Coreset Methods.} Coreset methods~\cite{welling2009herding,sener2017active} select representative data samples based on some heuristics calculated on the pre-trained embedding. Thus, it requires training the model first. Given the cheap cost on calculating and ranking heuristics, the major computational bottleneck for coreset method is on pre-training the neural network for a certain number of iterations.  
Likewise, our proposed \method{} has comparable complexity because it also needs to forward and backward the neural network for multiple iterations.  
Thus, their efficiency difference majorly depends on how many epochs we run for learning synthetic graphs in \method{} and for pre-training the model embedding in coreset methods.
In practice, we find that \method{} even requires less training cost than the coreset methods as shown in Section~\ref{sec:time}. 

\section{Experiment}\label{sec:experiments}
In this section, we conduct experiments to evaluate \method{}. Particularly, we aim to answer the following questions:
(a) how well can we condense a graph dataset and (b) how efficient is \method{}. Our code can be found in the supplementary files.

\subsection{Experimental settings}

\textbf{Datasets.} 
 To evaluate the performance of our method, we use multiple molecular datasets from Open Graph Benchmark (OGB)~\cite{hu2020open} and TU Datasets (DD, MUTAG and NCI1)~\cite{morris2020tudataset} for graph-level property classification, and one superpixel dataset CIFAR10~\cite{dwivedi2020benchmarkgnns}. 
 We also introduce a real-world e-commerce dataset. In particular, we randomly sample 1,109 sub-graphs from a large, anonymized internal knowledge graph. Each sub-graph is created from the ego network of a random selected product on the e-commerce website. We form a binary classification problem aiming at predicting the product category of the central product node in each sub-graph. 
 We use the public splits for OGB datasets and CIFAR10. For TU Datasets and the e-commerce dataset, we randomly split the graphs into 80\%/10\%/10\% for training/validation/test. Detailed dataset statistics are shown in Appendix~\ref{appendix:data}. 

\vskip 0.5em
\noindent\textbf{Baselines.} 
We compare our proposed methods with four baselines that produce discrete structures: three coreset methods (\textit{Random}, \textit{Herding}~\citep{welling2009herding} and \textit{K-Center}~\citep{farahani2009facility,sener2017active}), and a \textit{dataset condensation} method DCG~\cite{zhao2020dataset}: (a) {Random:} it randomly picks graphs from the training dataset. (b) {Herding:} it selects samples that are closest to the cluster center. Herding is often used in replay-based methods for continual learning ~\citep{rebuffi2017icarl,castro2018end}.
(c) {K-Center}: it selects the center samples to minimize the largest distance between a sample and its nearest center. 
(d) {DCG}: As vanilla DC~\cite{zhao2020dataset} cannot generate discrete structure, we randomly select graphs from training and apply DC to learn the features for them, which we term as DCG. We use the implementations provided by~\citet{zhao2020dataset} for Herding, K-Center and DCG. Note that coreset methods only select existing samples from training while DCG learns the node features.

\vskip 0.5em
\noindent\textbf{Evaluation Protocol.} 
To evaluate the effectiveness of the proposed method, we test the classification performance of GNNs trained with condensed graphs on the aforementioned graph datasets. Concretely, it involves three stages: (1) learning synthetic graphs, (2) training a GCN on the synthetic graphs and (3) test the performance of GCN. We first generate the condensed graphs following the procedure in Algorithm 1. Then we train a GCN classifier with the condensed graphs. Finally we evaluate its classification performance on the real graphs from test set. For baseline methods, we first get the selected/condensed graphs and then follow the same procedure.  We repeat the generation process of condensed graphs 5 times with  different random seeds and train GCN on these graphs with 10 different random seeds. In all experiments, we report the mean and standard deviation of these results.

\vskip 0.5em
\noindent\textbf{Parameter Settings.}
When learning the synthetic graphs,  we adopt 3-layer GCN with 128 hidden units as the model for gradient matching. The learning rates for structure and feature parameters are set to 1.0 (0.01 for ogbg-molbace and CIFAR10) and 0.01, respectively. We set $K_1$ to 1000 and $\beta$ to 0.1. Additionally, we use mean pooling to obtain graph representation for all datasets except ogbg-molhiv. We use sum pooling for ogbg-molhiv as it achieves better classification performance on the real dataset. During the test stage, we use GCN with the same architecture and we train the model for 500 epochs (100 epochs for ogbg-molhiv) with an initial learning rate of 0.001. 

\vskip -1em
\subsection{Performance with Condensed Graphs}

\subsubsection{Classification Performance Comparison.} To validate the effectiveness of the proposed framework, we measure the classification performance of GCN trained on condensed graphs. Specifically, we vary the number of learned synthetic graphs per class in the range of $\{1, 10, 50\}$ ($\{1, 10, 20\}$ for MUTAG and E-commerce) and train a GCN on these graphs. Then we evaluate the classification performance of the trained GCN on the original test graphs. Following the convention in OGB~\cite{hu2020open}, we report the ROC-AUC metric for ogbg-molbace, ogbg-molbbbp and ogbg-molhiv; for other datasets we report the classification accuracy (\%).  The results  are summarized in Table~\ref{tab:main}. Note that the \textit{Ratio} column presents the ratio of synthetic graphs to original graphs and we name it as \textit{condensation ratio}; the \textit{Whole Dataset} column shows the GCN performance achieved by training on the original dataset. From the table, we make the following observations:
\begin{compactenum}[(a)]
    \item The proposed \method{} consistently achieves better performance than the baseline methods under different condensation ratios and different datasets. Notably, when generating only 2 graphs on ogbg-molbace dataset (0.2\%), we achieve an ROC-AUC of 0.657 while the performance on full training set is 0.714, which means we approximate 92\% of the original performance with only 0.2\% data. Likewise, we are able to approximate 96.5\% of the original performance on ogbg-molhiv with 0.3\% data.  By contrast,  baselines underperform our method by a large margin. Similar observations can be made on other datasets, which demonstrates the effectiveness of learned synthetic graphs in preserving the information of the original dataset.
    \item Increasing the number of synthetic graphs can improve the classification performance. For example, we can approximate the original performance by 89\%/93\%/98\% with 0.2\%/2.1\%/10.6\% data on DD. More synthetic samples indicate more learnable parameters that can preserve the information residing in the original dataset and present more diverse patterns that can help train GNNs better. This observation is in line with our experimental results in Section~\ref{sec:increase}. 
    \item The performance on CIFAR10 is less promising due to the limit number of synthetic graphs. We posit that the dataset has more complex topology and feature information and thus requires more parameters to preserve sufficient information. However, we note that our method still outperforms the baseline methods especially when producing only 1 sample per class, which suggests that our method is much more data-efficient. Moreoever, we are able to promote the performance on CIFAR10 by learning a larger synthetic set as shown in  Section~\ref{sec:increase}.
    \item Learning both synthetic graph structure and node features is necessary for preserving the information in original graph datasets. By checking the performance DCG, which only learns node features based on randomly selected graph structure, we see that DCG underperforms \method{} by a large margin in most cases. This indicates that learning node features solely is sub-optimal for condensing graphs.  
\end{compactenum}


\subsubsection{Efficiency Comparison}\label{sec:time}
Since one of our goals is to enable scalable dataset condensation, we now evaluate the efficiency of \method{}. We compare \method{} with the coreset method Herding, as it is less time-consuming than DCG and generally achieves better performance than other baselines. We adopt the same setting as in Table~\ref{tab:main}: 1000 iterations for \method{}, i.e., $K_1=1000$, and  500 epochs (100 epochs for ogbg-molhiv) for pre-training the graph convolutional network as required by Herding. We also note that pre-training the neural network need to go over the whole dataset at every epoch while \method{} only processes a batch of graphs. In Table~\ref{tab:time}, we report the running time on an NVIDIA V100 GPU for CIFAR10, ogbg-molhiv and DD. From the table, we 
make two observations:
\begin{compactenum}[(a)]
    \item  \method{} can be faster than Herding. In fact, \method{} requires less training time in all the cases except in DD with 50 graphs per class. Herding needs to fully training the model on the whole dataset to obtain good-quality embedding, which can be quite time-consuming. On the contrary, \method only requires matching gradients for $K_1$ initializations and does not need to fully train the model on the large real dataset.
    \item The running time of \method{} increases with the increase of the number of synthetic graphs $N'$. It is because \method{}   processes the condensed graphs at each iteration, of which the time complexity is $O(N'L(n^2d+nd^2))$ for an $L$-layer GCN. Thus, the additional complexity depends on $N'$. By contrast, the increase of $N'$  has little impact on Herding since the process of selecting samples based on pre-defined heuristic is very fast. 
    \item The average nodes in synthetic graph $n$ also impacts the training cost of \method{}. For instance, the training cost on ogbg-molhiv ($n$=26) is much lower than that on DD ($n$=285), and the gap of cost between the two methods on ogbg-molhiv and DD is very different. As mentioned earlier, it is because the complexity of the forward process in GCN is $O(N'L(n^2d+nd^2))$ for $N'$ condensed graphs with node size of $n$. 
\end{compactenum}
To summarize, the efficiency difference of Herding and \method{} depends on the number of condensed/selected samples and the training iterations adopted in practice and we empirically found that \method{} consumes less training cost.

\begin{table}[t]
\small
\caption{Comparison of running time (minutes). }
\vskip -1em
\label{tab:time}
\begin{tabular}{@{}c|cc|cc|cc@{}}
\toprule
      & \multicolumn{2}{c}{CIFAR10} & \multicolumn{2}{|c}{ogbg-molhiv}   & \multicolumn{2}{|c}{DD}    \\ \midrule
G./Cls. & Herding        & \method{}        & Herding        & \method{}   & Herding &  \method{}    \\ 
1      & 44.5m            & 4.7m          & 4.3m            & 0.66m  &1.6m & 1.5m    \\
10     & 44.5m            & 4.9m       & 4.3m           & 0.67m       & 1.6m & 1.5m \\
50     & 44.5m            & 5.7m       & 4.3m           & 0.68m   & 1.6m & 2.0m \\ \bottomrule
\end{tabular}
\vskip -1em
\end{table}



\begin{figure*}[!t]%
\centering
  \subfloat[Learning larger synthetic set.]{\label{fig:increase}{\includegraphics[width=0.26\linewidth]{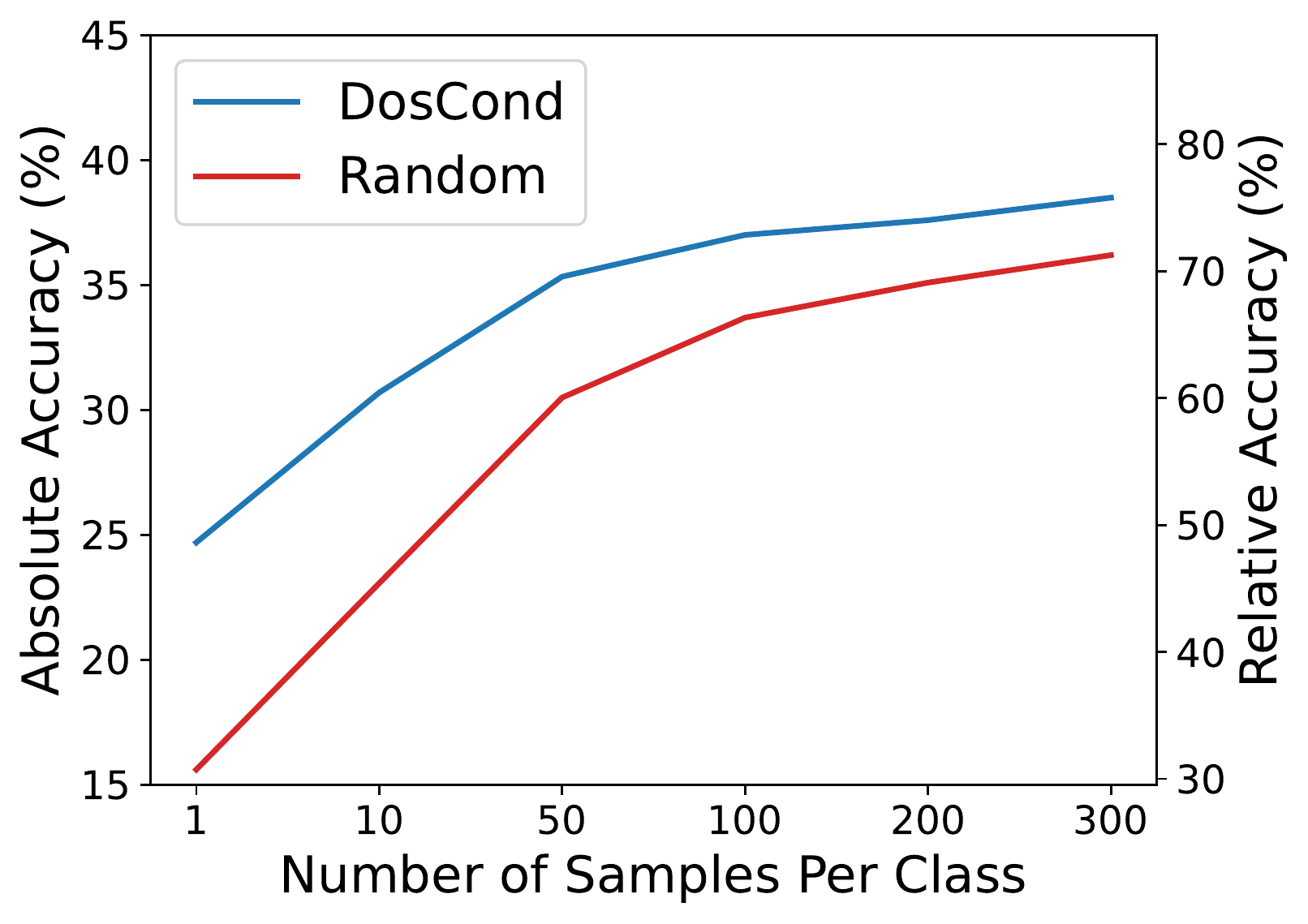} }}%
  \subfloat[One-Step v.s. bi-Level matching]{\label{fig:one_step}{\includegraphics[width=0.24\linewidth]{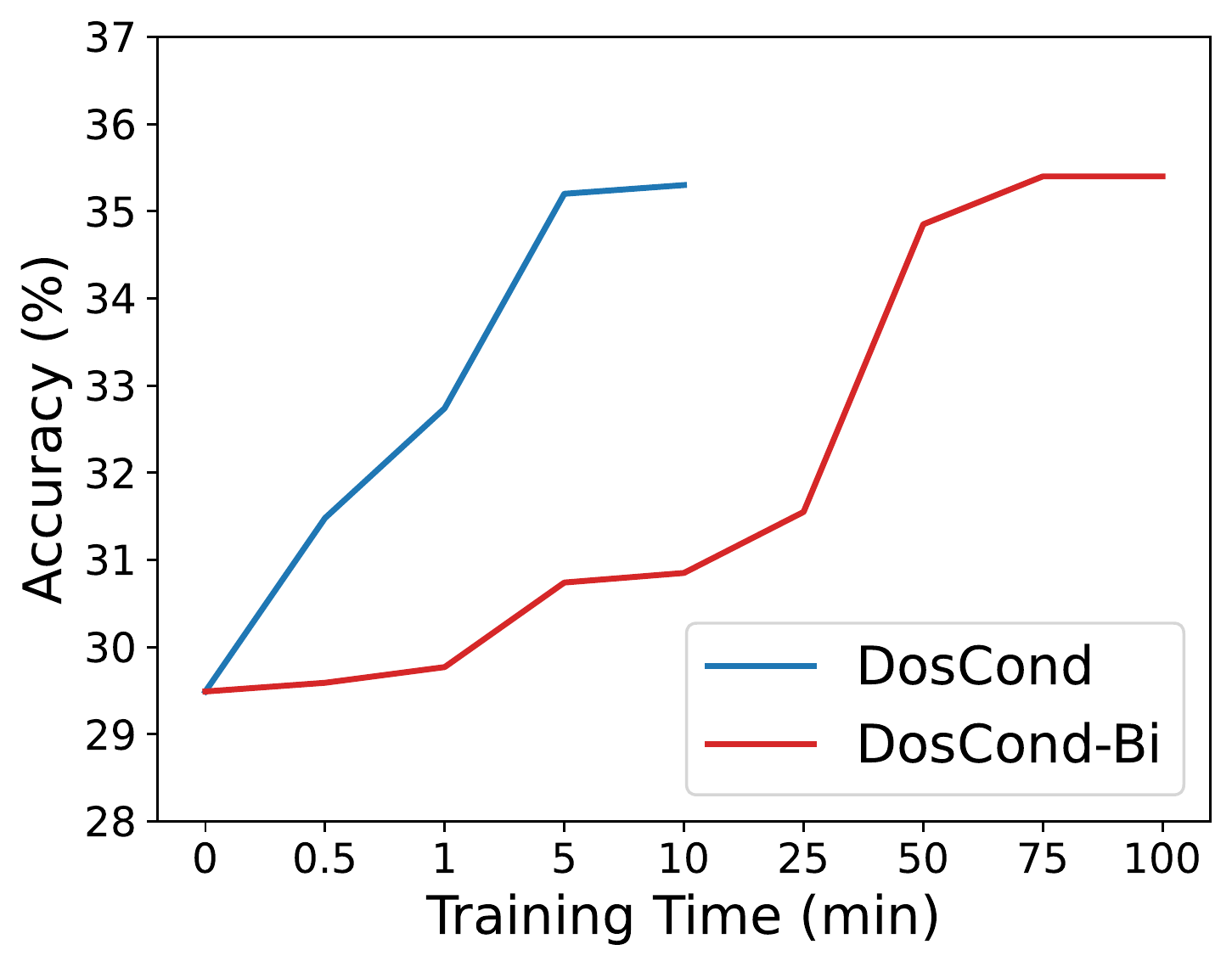} }}%
 \subfloat[Varying $\beta$ on DD]{\label{fig:dd}{\includegraphics[width=0.24\linewidth]{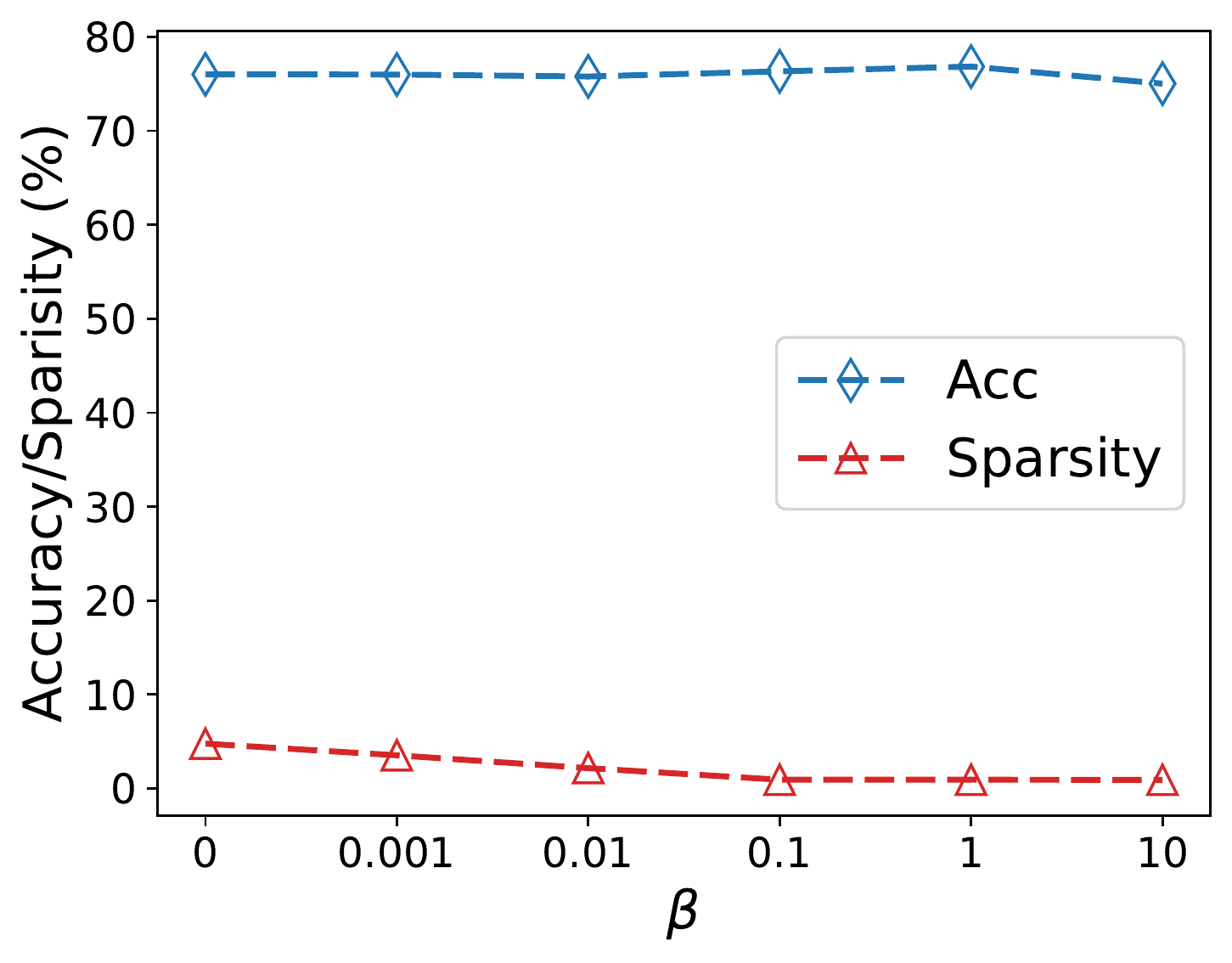} }}%
 \subfloat[Varying $\beta$ on NCI1]{\label{fig:nci1}{\includegraphics[width=0.24\linewidth]{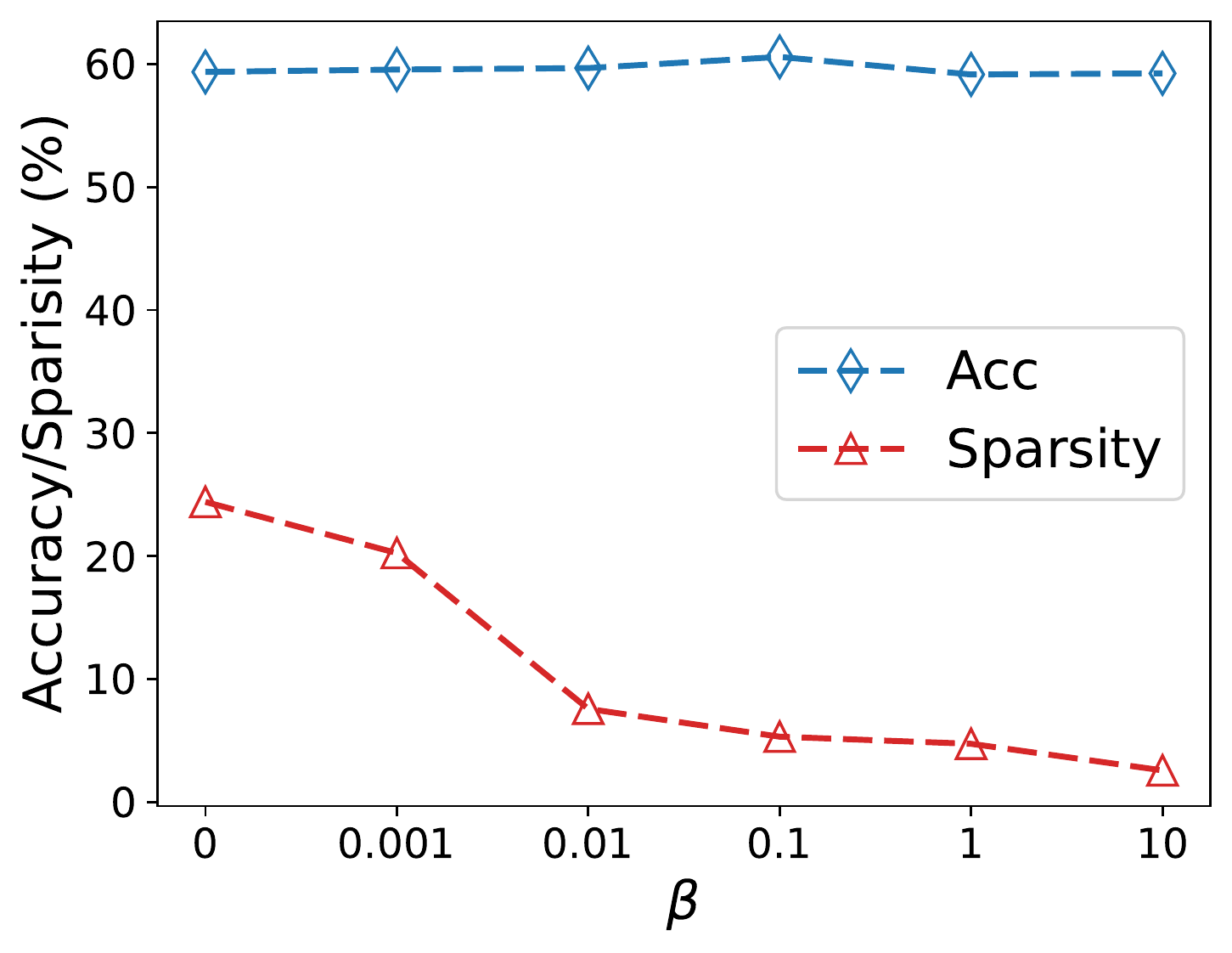} }}%
 \qquad
\vskip -1.2em
\caption{Parameter analysis w.r.t. the sparsity regularization.}  
\label{fig:param}
\vskip -1.5em
\end{figure*}

\begin{figure*}[t]%
\centering
 \subfloat[Random]{\label{fig:one_step}{\includegraphics[width=0.21\linewidth]{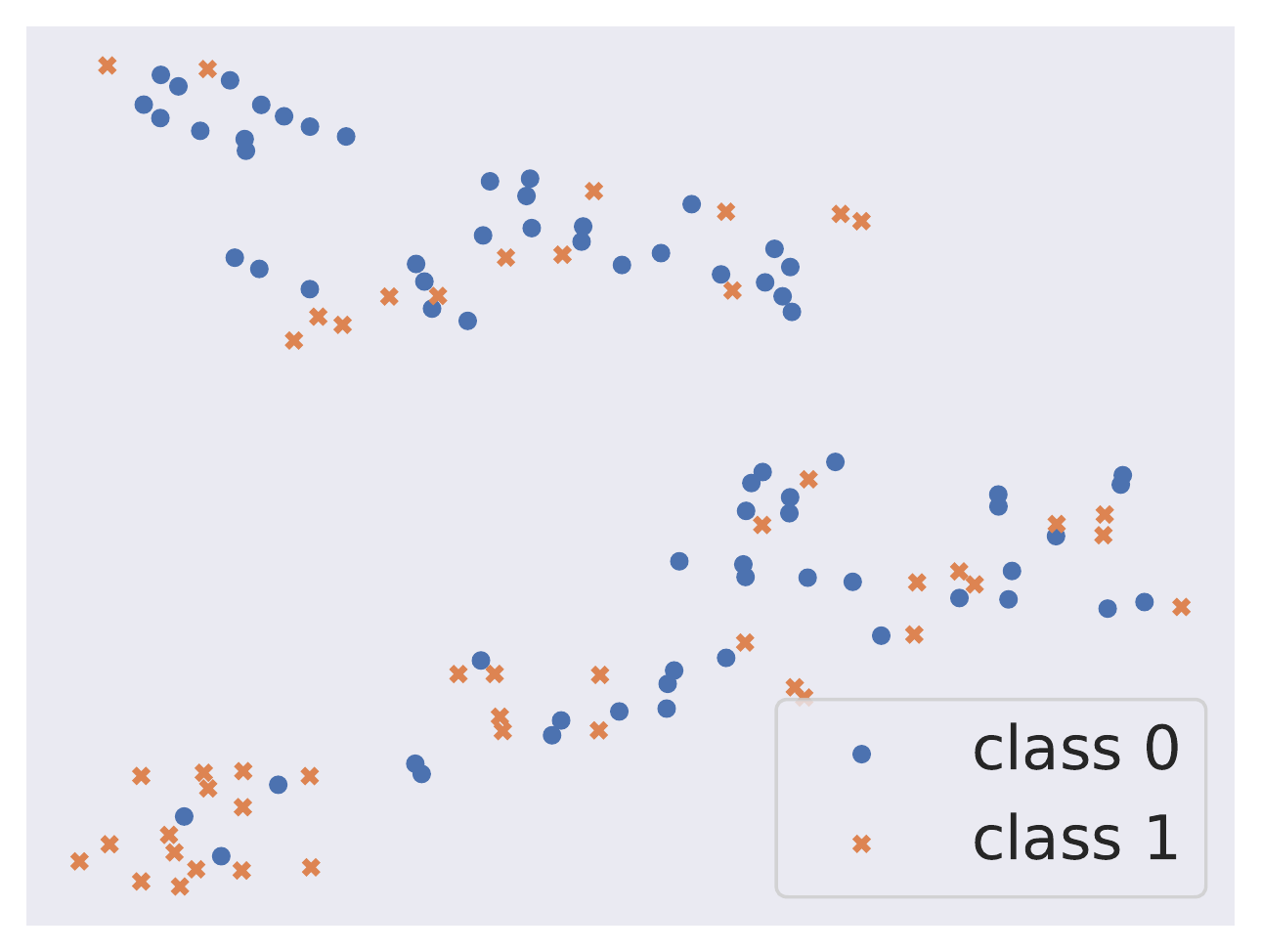} }}%
 \subfloat[DCG]{\label{fig:one_step}{\includegraphics[width=0.21\linewidth]{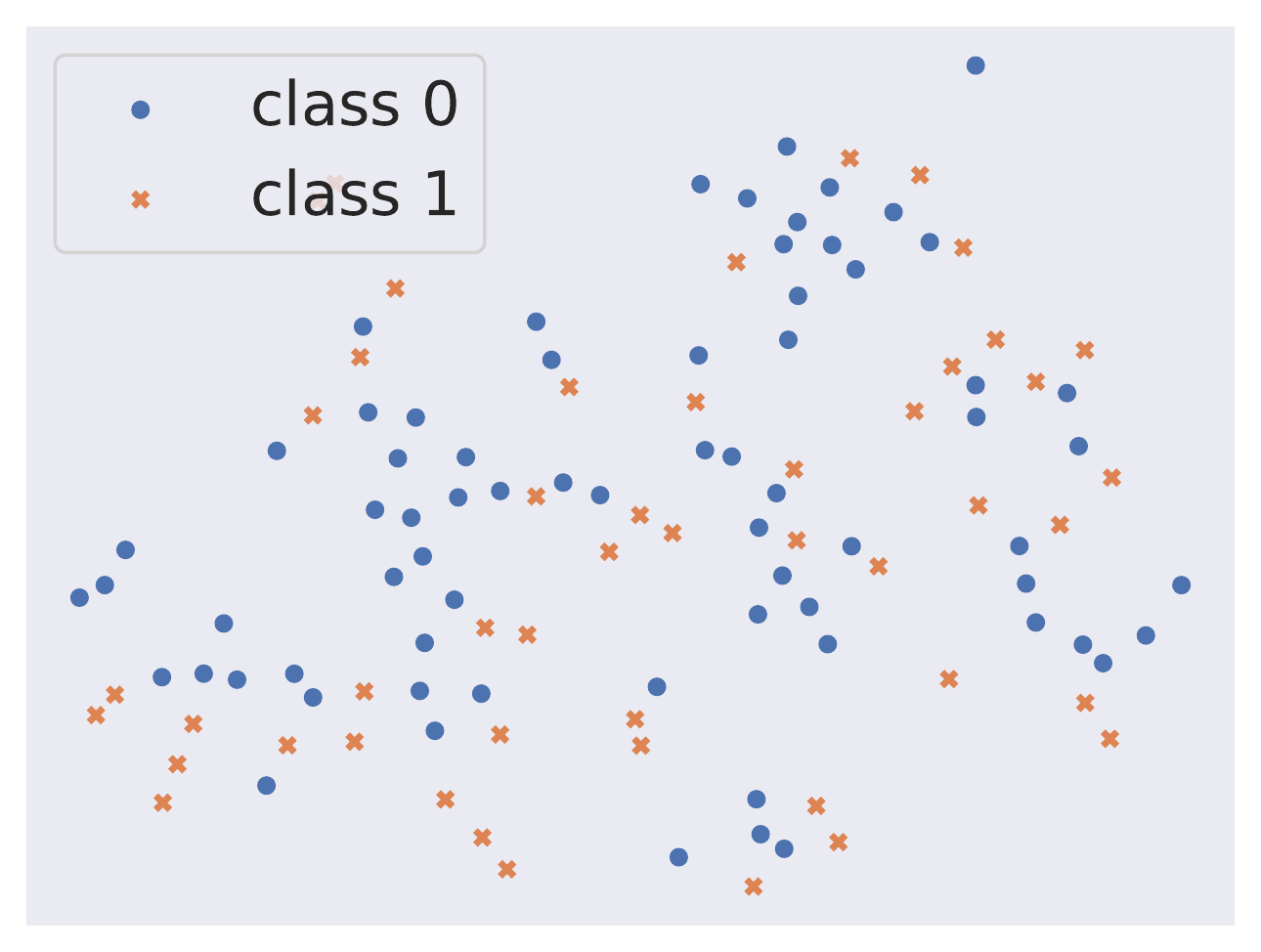} }}%
 \subfloat[\method]{\label{fig:one_step}{\includegraphics[width=0.21\linewidth]{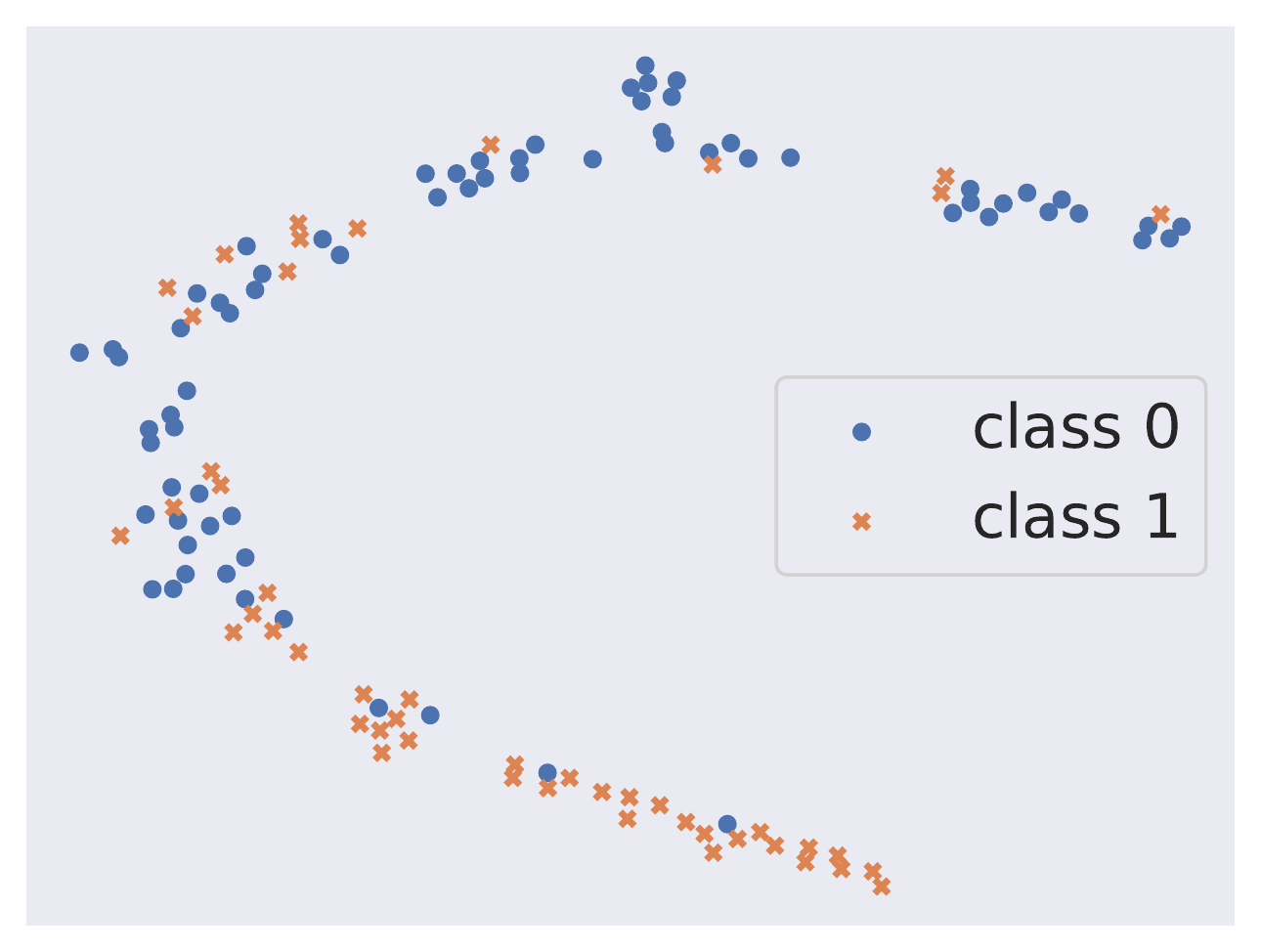} }}%
 \subfloat[Whole Dataset]{\label{fig:sparsity}{\includegraphics[width=0.21\linewidth]{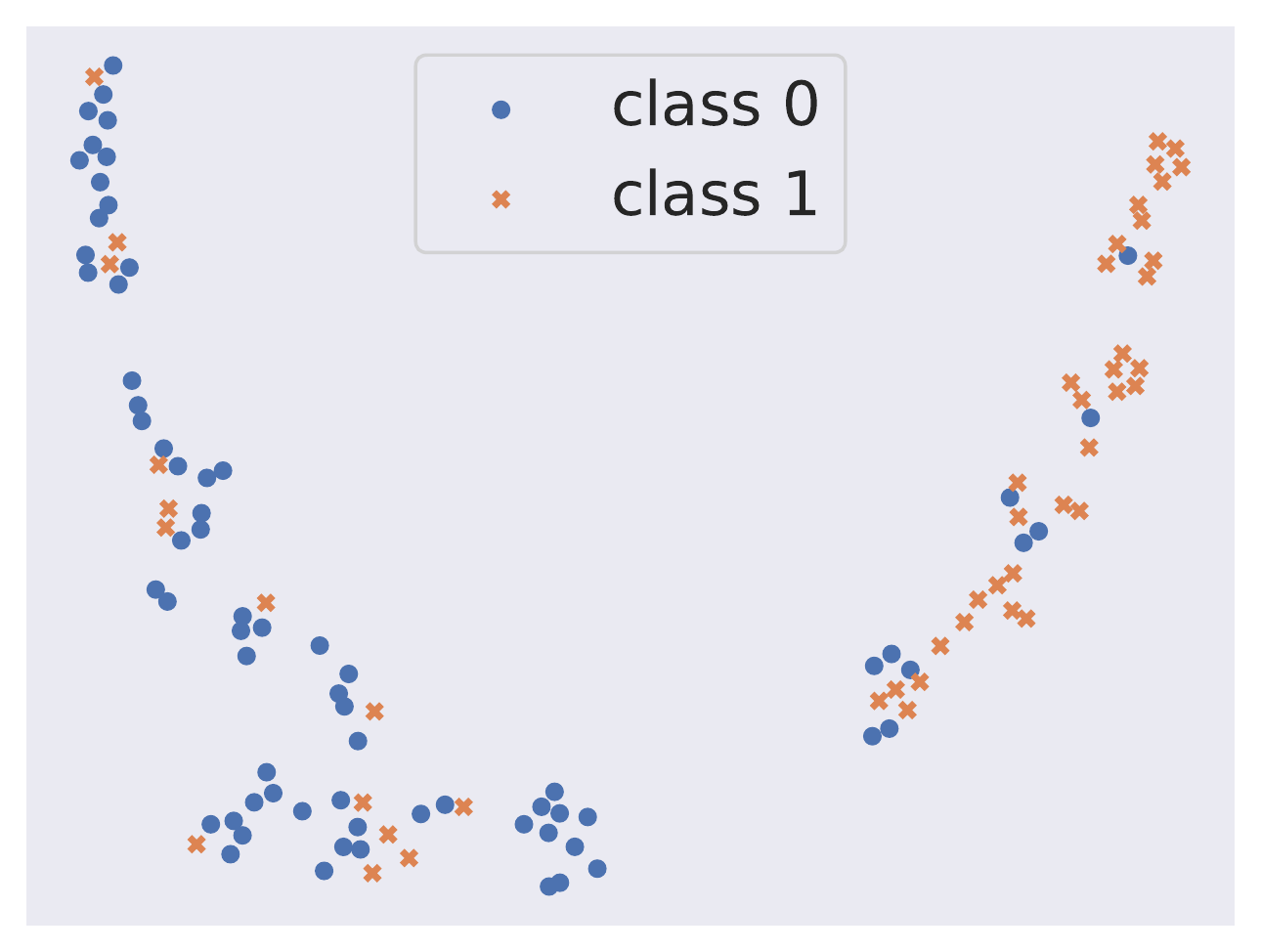} }}%
 \qquad
\vskip -1em 
\caption{T-SNE visualizations of embedding learned with condensed graphs on DD.}
\vskip -1em
\label{fig:vis}
\end{figure*}

\subsection{Further Investigation}
\label{sec:ablation}
In this subsection, we perform further investigations to provide a better understanding of our proposed method. 

\subsubsection{Increasing the Number of Synthetic Graphs.}\label{sec:increase}
We study whether the classification performance can be further boosted when using larger synthetic size. Concretely, we vary the size of the learned graphs from 1 to 300 and report the results of absolute and relative accuracy w.r.t. whole dataset training accuracy for CIFAR10 in Figure~\ref{fig:increase}. It is clear to see that both Random and \method{} achieve better performance when we increase the number of samples used for training. Moreover, our method outperforms the random baseline under different condensed dataset sizes. It is worth noting that the performance gap between the two methods diminishes with the increase of the number of samples. This is because the random baseline will finally approach the whole dataset training if we continue to enlarge the size of the condensed set, in which the performance can be considered as the upper bound of \method{}.

\subsubsection{Ablation Study.} To examine how different model components affect the model performance, we perform ablation study on the proposed one-step gradient matching and  regularization terms. We create an ablation of our method, namely \method{}\textit{-Bi}, which adopts the vanilla gradient matching scheme that involves a bi-level optimization. Without loss of generality, we compare the training time and classification accuracy of \method{} and \method{}\textit{-Bi} in the setting of learning 50 graphs/class synthetic graphs on CIFAR10 dataset. The results are summarized in Figure~\ref{fig:one_step} and we can see that \method{} needs approximately 5 minutes to reach the performance of \method{}\textit{-Bi} trained for 75 minutes, which indicates that \method{} only requires  6.7\% training cost. It further demonstrates the efficiency of  the proposed one-step gradient matching strategy.

Next we study the effect of sparsity regularization on \method{}. Specifically, we vary the sparsity coefficient $\beta$ in the range of $\{0, 0.001, 0.01, 0.1,$ $1,10\}$ and report the classification accuracy and graph sparsity on DD and NCI datasets in Figure~\ref{fig:sparsity}. Note that the graph sparsity is defined as the ratio of the number of edges to the square of the number of nodes. As shown in the figure, when $\beta$ gets larger, we exert a stronger regularization on the learned graphs and the graphs become more sparse. Furthermore, the increased sparsity does not affect  the classification performance. This is a desired property since sparse graphs can save much space for storage and reduce training cost for GNNs. We also remove the regularization of Eq.~\eqref{eq:reg_norm} for ogbg-molhiv, we obtain the performance of 0.724/ 0.727/0.731 for 1/10/50 graphs per class, which is slightly worse than the one with this regularization.

\begin{table*}[t]
\caption{Node classification accuracy (\%) comparison. The numbers in parentheses indicate the running time for 100 epochs and $r$ indicates the ratio of number of nodes in the condensed graph to that in the original graph.}
\vskip -1em
\label{tab:node}
\small
\begin{tabular}{@{}lccccc@{}}
\toprule
               & \multicolumn{1}{l}{\begin{tabular}[c]{@{}l@{}}Cora, $r$=2.6\%\end{tabular}} & \begin{tabular}[c]{@{}l@{}}Citeseer, $r$=1.8\%\end{tabular} & \multicolumn{1}{l}{\begin{tabular}[c]{@{}l@{}}Pubmed, $r$=0.3\%\end{tabular}} & \multicolumn{1}{l}{\begin{tabular}[c]{@{}l@{}}Arxiv, $r$=0.25\%\end{tabular}} & \multicolumn{1}{l}{\begin{tabular}[c]{@{}l@{}}Flickr, $r$=0.1\%\end{tabular}}  \\ \midrule
\textit{GCond}          & 80.1 (75.9s)                                                                       &  {70.6} (71.8s)                                   & 77.9 (51.7s)                                                                        & 59.2 (494.3s)                                                                           & 46.5 (51.9s)                                                                                                                 \\
\method{} & 80.0 (3.5s)                                                                       & 71.0 (2.8s)                                                          & 76.0 (1.3s)                                                                        & 59.0 (32.9s)                                                                               & 46.1 (14.3s)                                                                                                                                   \\ \midrule
Whole Dataset  & 81.5                                                                       &  {71.7}                                   & 79.3                                                                         & 71.4                                                                              & 47.2                                                                                                                    \\ \bottomrule
\end{tabular}
\vskip -1.2em
\end{table*}

\subsubsection{Visualization.} 
We further investigate whether GCN can learn discriminative representations from the synthetic graphs learned by \method{}. Specifically, we use t-SNE~\cite{van2008visualizing} to visualize the learned graph representation from GCN trained on different condensed graphs. We train a GCN on graphs produced by different methods and use it to extract the latent representation for real graphs from test set. Without loss of generality, we provide the t-SNE plots on DD dataset with 50 graphs per class in Figure~\ref{fig:vis}. It is observed that the graph representations learned with randomly selected graphs are mixed for different classes. This suggests that using randomly selected graphs cannot help GCN learn discriminative features. Similarly, DCG graphs also resulted in poorly trained GCN that outputs indistinguishable graph representations. By contrast, the representations are well separated for different classes when learned with \method{} graphs (Figure~\ref{fig:vis}c) and they are as discriminative as those learned on the whole training dataset (Figure~\ref{fig:vis}d). This demonstrates that the graphs learned by \method{} preserve sufficient information of the original dataset so as to recover the original performance.

\begin{figure}[t]%
\centering
  \subfloat[DD]{\label{fig:dd_loss}{\includegraphics[width=0.47\linewidth]{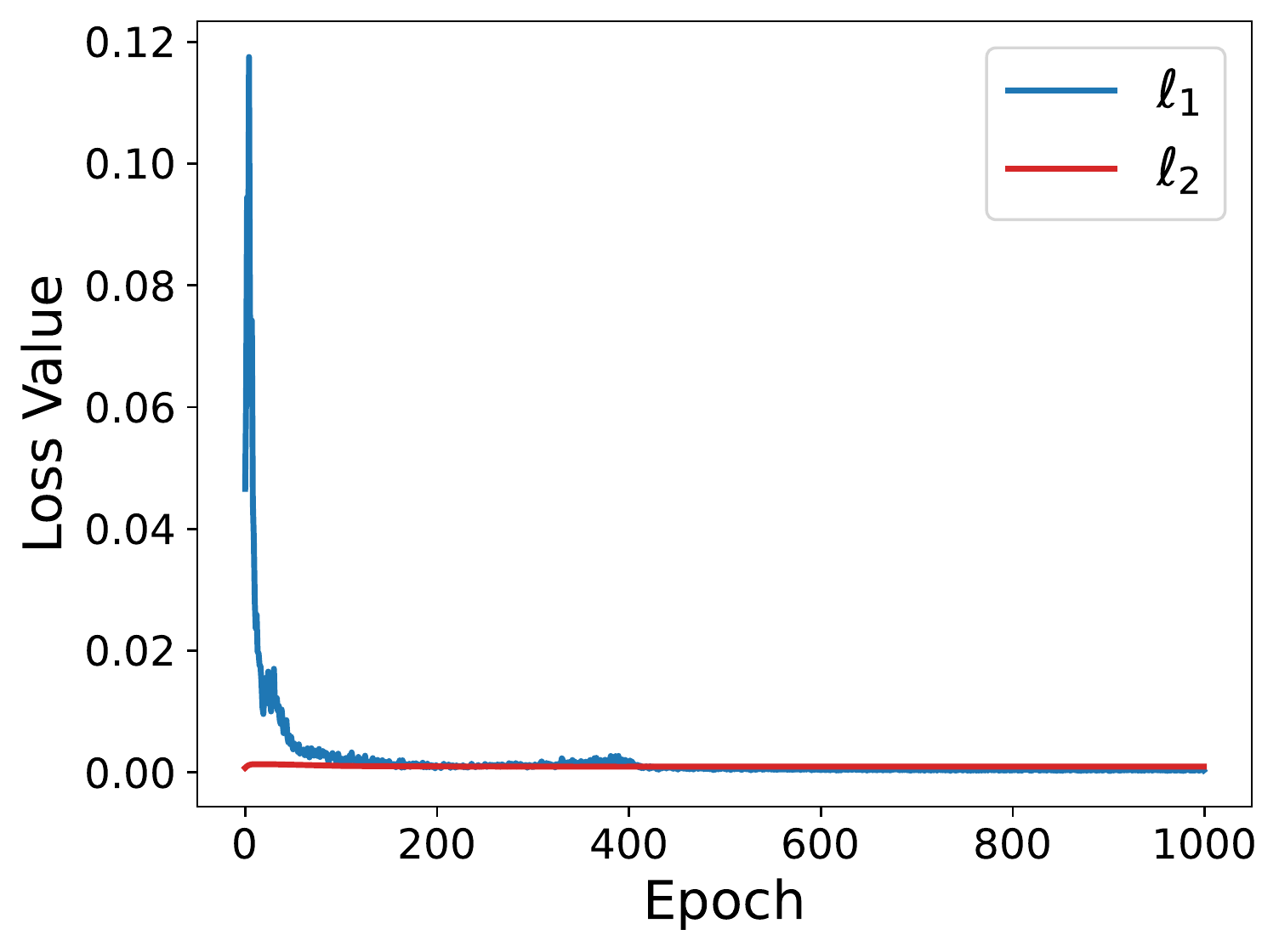} }}%
  \subfloat[ogbg-molhiv]{\label{fig:molhiv_loss}{\includegraphics[width=0.47\linewidth]{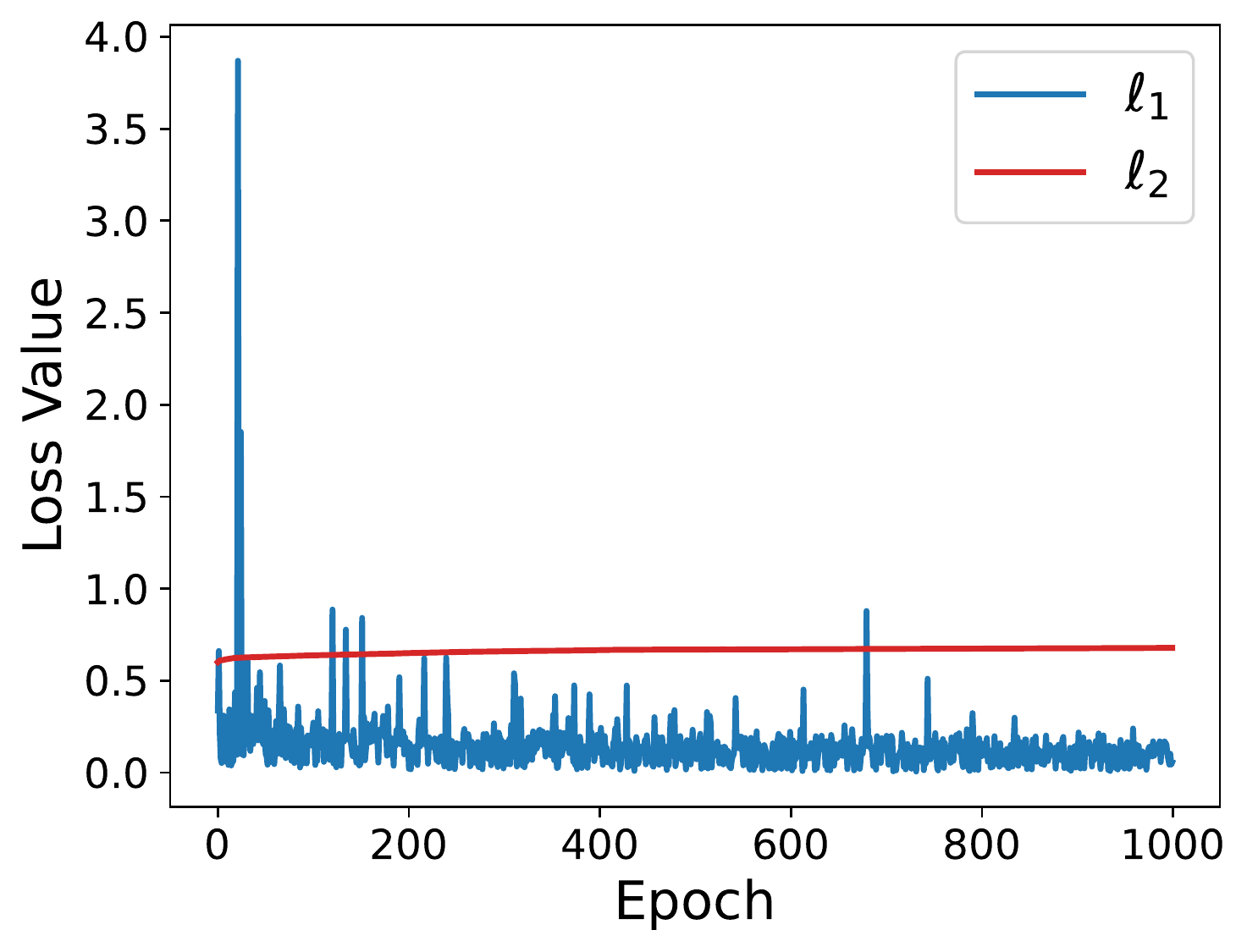} }}%
\vskip -1.2em
\caption{Scale of the two terms in Eq.~(11).}  
\label{fig:loss}
\vskip -2em
\end{figure}

\subsubsection{Scale of the two terms in Eq.~\eqref{eq:prop}.}\label{sec:terms}
As mentioned earlier in Section~\ref{sec:one_step}, the scale of the first term is essentially larger than the second term in Eq.~\eqref{eq:prop}. We now perform empirical study to verify this statement. Since both terms contain the factor $M$,  we simply drop it and focus on studying $\ell_1={\sqrt{2}}\|\nabla_{\theta} \ell_\mathcal{T}\left(\theta_{0}\right)- \nabla_{\theta} \ell_S\left(\theta_{0}\right)\| $ and $\ell_2=\frac{3}{2\sqrt{T}}\cdot\frac{C-1}{C N'}\sqrt{\sum_{i}\gamma_i\|{{\bf 1}^\top{\bf A}'^K_{(i)}}{\bf X}'_{(i)}\|^2}$. Specifically, we set $T$ to 500 and $N'$ to 50, and plot the changes of these two terms during the training process of \method{}. The results on DD (with mean pooling) and ogbg-molhiv (with sum pooling) are shown in Figure~\ref{fig:loss}. We can observe that the scale of $\ell_1$ is much larger than $\ell_2$ at the first few epochs when using mean pooling as shown in Figure~\ref{fig:dd_loss}. By contrast,  $\ell_2$ is not negligible  when using sum pooling as shown in Figure~\ref{fig:molhiv_loss} and it is desired to include it as a regularization term in this case. These observations provide support for ours discussion of theoretical analysis in Section~\ref{sec:one_step}.

\subsection{Node Classification}
\label{sec:node}
Next, we investigate whether the proposed method works well in node classification so as to support our analysis in Theorem~2 in Appendix~\ref{app:node}. Specifically, following \textit{GCond}~\cite{jin2022graph}, a condensation method for node classification, we use 5 node classification datasets: Cora, Citeseer, Pubmed~\cite{kipf2016semi}, ogbn-arxiv~\cite{hu2020open} and Flickr~\cite{graphsaint-iclr20}. The dataset statistics are shown in~\ref{tab:data_node}. We follow the settings in \textit{GCond} to generate one condensed graph for each dataset, train a GCN on the condensed graph, and evaluate its classification performance on the original test nodes. To adopt \method{} into node classification, we replace the bi-level gradient matching scheme in \textit{GCond} with our proposed one-step gradient matching. The results of classification accuracy and running time per epoch are summarized in Table~\ref{tab:node}.  
From the table, we make the following observations:
\begin{compactenum}[(a)]
\item The proposed \method{} achieves similar performance as \textit{GCond} and the performance is also comparable to the original dataset. For example, we are able to approximate the original training performance by 99\% with only 2.6\% data on Cora. It demonstrates the effectiveness of \method{} in the node classification case and justifies Theorem~2 from an empirical perspective. 
\item  The training cost of \method{} is essentially lower than \textit{GCond} as \method avoids the expensive bi-level optimization. By examining their running time, we can see that \method{} is up to 40 times faster than \textit{GCond}.
\end{compactenum}
We further note that \textit{GCond} produces weighted graphs which require storing the edge weights in float formats, while \method{} outputs discrete graph structure which can be stored as binary values. Hence, the graphs learned by \method{} are more memory-efficient.

\vskip -3em
\section{Related Work}\label{sec:relatedwork}

\noindent\textbf{Graph Neural Networks.} 
As the generalization of deep neural network to graph data, graph neural networks (GNNs)~\cite{kipf2016semi,klicpera2018predict-appnp,gat,wu2019comprehensive-survey,wu2019simplifying,tang2020transferring,jin2020graph,liu2022generating,wang2022improving} have revolutionized the field of graph representation learning through effectively exploiting graph structural information. GNNs have achieved remarkable performances in basic graph-related tasks such as graph classification~\cite{xu2018how,guo2021few}, link prediction~\cite{fan2019graph} and node classification~\cite{kipf2016semi}. Recent years have also witnessed their great success achieved in many real-world applications such as recommender systems~\cite{fan2019graph}, computer vision~\cite{li2019deepgcns}, drug discovery~\cite{duvenaud2015convolutional} and etc. GNNs take both adjacency matrix and node feature matrix as input and output node-level representations or graph-level representations. Essentially, they follow a message-passing scheme~\cite{gilmer2017neural} where each node  first aggregates the information from its neighborhood and then transforms the aggregated information to update its representation. Furthermore, there is significant progress in developing deeper GNNs~\cite{liu2020towards,jin2022feature}, self-supervised GNNs~\cite{you2021graph,you2020does,wang2022graph} and graph data augmentation~\cite{zhao2021data,ding2022data,zhao2022graph}.  

\vskip 0.1em
\noindent\textbf{Dataset Distillation \& Dataset Condensation.} 
It is widely received that training neural networks on large datasets can be prohibitively costly. To alleviate this issue, dataset distillation (DD)~\cite{wang2018dataset} aims to distill knowledge of a large training dataset into a small number of synthetic samples. DD formulates the distillation process as a learning-to-learning problem and solves it through bi-level optimization. 
To improve the efficiency of DD, dataset condensation (DC)~\cite{zhao2020dataset,zhao2021dataset} is proposed to learn the small synthetic dataset by matching the gradients of the network parameters w.r.t. large-real and small-synthetic training data. It has been demonstrated that these condensed samples can facilitate critical applications such as continual learning~\cite{zhao2020dataset,zhao2021dataset,kim2022dataset,lee2022dataset,zhao2021dataset-dm}, neural architecture search~\cite{nguyen2021dataset,nguyen2021infinitely,yang2022dataset} and privacy-preserving scenarios~\cite{dong2022privacy}
Recently, following the gradient matching scheme in DC, \citet{jin2022graph} propose a condensation method to condense a large-scale graph to a small graph for node classification. Different from \cite{jin2022graph} which learns weighted graph structure, we aim to solve the challenge of learning discrete structure and we majorly target at graph classification. Moreover, our method avoids the costly bi-level optimization and is much more efficient than the previous work. A detailed comparison is included in Section~\ref{sec:node}.



 \vspace{-0.5em}
\section{Conclusion}\label{sec:conclusion}
Training graph neural networks on a large-scale graph dataset consumes high computational cost. One solution to alleviate this issue is to condense the large graph dataset into a small synthetic dataset. In this work, we propose a novel framework \method{} that adopts a one-step gradient matching strategy to efficiently condenses real graphs into a small number of informative graphs with discrete structures. We further justify the proposed method from both theoretical and empirical perspectives. Notably, our experiments show that we are able to reduce the dataset size by 90\% while approximating up to 98\% of the original performance. In the future, we plan to investigate interpretable  condensation methods and diverse applications of the condensed graphs.

\vskip -1.5em
\section*{ACKNOWLEDGEMENT}
Wei Jin and Jiliang Tang are supported by  the National Science Foundation (NSF) under grant numbers IIS1714741, CNS1815636, IIS1845081, IIS1907704, IIS1928278, IIS1955285, IOS2107215, and  \allowbreak IOS2035472, the Army Research Office (ARO) under grant number W911NF-21-1-0198, and Amazon.com, Inc.


\balance

\bibliographystyle{ACM-Reference-Format}
\bibliography{sample}

\appendix

\section{Experimental Setup}
\subsection{Algorithm}
\label{app:alg}
\begin{algorithm}[!h]
\small
\caption{\method{} for Condensing Graphs}
\begin{algorithmic}[1]
\label{alg}
\STATE {\bf Input:} Training data $\mathcal{T}=({\bf A}, {\bf X}, \mathcal{Y})$
\STATE {\bf Required:} Pre-defined condensed labels $\mathcal{Y}'$, graph neural network $f_\theta$, temperature $\tau$,  desired sparsity $\epsilon$, regularization coefficient $\beta$, learning rates $\eta_1, \eta_2$, number of epochs $K_1$.
\STATE Initialize ${{\bf {\bf \Omega}}}, {\bf X'}$ 
\FOR{$k=0,\ldots, K_1-1$}
  \STATE Sample ${\theta}_0\sim P_{{\theta}_0}$\\
\STATE Sample $\alpha\sim\operatorname{Uniform}(0,1)$
  \STATE
  Compute ${\bf A'} = \sigma\left(\left(\log \alpha-\log (1-\alpha)+{\bf \Omega}\right) / \tau\right)$ 
  \FOR{$c=0,\ldots,C-1$}  
  \STATE Sample $({\bf A}_c,{\bf X}_c, \mathcal{Y}_c)\sim \mathcal{T}$ and $({\bf A}_{c}',{\bf X}_{c}', \mathcal{Y}_{c}')\sim \mathcal{S}$ \\
  \STATE Compute $\ell_{{T}}=\ell\left(f_{{\theta}_{0}}({\bf A}_c,{\bf X}_c), \mathcal{Y}_c\right)$ 
  \STATE Compute $\ell_{{S}}= \ell\left(f_{{\theta}_{0}}({\bf A}_{c}',{\bf X}_{c}'), \mathcal{Y}_{c}'\right)$ \\
  \STATE Compute $\ell_\text{reg} = \operatorname{max}(\sum_{i,j}\sigma({\bf \Omega}_{ij}) - \epsilon, 0)$
  \STATE  Update ${{\bf {\bf \Omega}}} \leftarrow {\bf \Omega} -\eta_1 \nabla_{\bf \Omega}    (D(\nabla_{\boldsymbol{\theta}_0}\ell_{{T}}, \nabla_{\boldsymbol{\theta}_0}\ell_{{S}})+\beta\ell_\text{reg})$\\
  \STATE  Update ${\bf X'} \leftarrow {\bf X'} -\eta_2 \nabla_{\bf X'}    (D(\nabla_{\boldsymbol{\theta}_0}\ell_{{T}}, \nabla_{\boldsymbol{\theta}_0}\ell_{{S}})+\beta\ell_\text{reg})$\\
  \ENDFOR
 \ENDFOR
\STATE \textbf{Return:} $({\bf \Omega}, {\bf X'}, \mathcal{Y}')$ \\
\end{algorithmic}
\end{algorithm}

\subsection{Dataset Statistics and Code}
\label{appendix:data}
Dataset statistics are shown in Table 4 and 5.  We provide our code in the supplementary file for the purpose of reproducibility.

\begin{table}[h!]
\vskip -1em
\fontsize{8.7}{9.2}\selectfont
 \setlength{\tabcolsep}{1pt}
    \caption{Graph classification dataset
    statistics.}\label{tab:data-semantics}
    \vskip -1em
    \centering
    \begin{tabular}{llcccc}
    \toprule
    Dataset &  Type & \#Clases & \#Graphs & Avg. Nodes & Avg. Edges \\
    \midrule  
    CIFAR10   &  Superpixel        & 10 & 60,000 & 117.6 & 941.07 \\
    ogbg-molhiv    & Molecule & 2  & 41,127  & 25.5 & 54.9  \\
    ogbg-molbace & Molecule& 2&1,513 & 34.1 & 36.9  \\
    ogbg-molbbbp & Molecule& 2 &2,039 &24.1 &26.0 \\
    MUTAG     & Molecule & 2 & 188 & 	17.93 & 19.79 \\
    NCI1      & Molecule  & 2 & 4,110 & 29.87 & 32.30  \\
    DD  & Molecule & 2 & 1,178 & 284.32 & 715.66  \\
    E-commerce & Transaction & 2 & 1,109 & 33.7 & 56.3 \\ 
    \bottomrule
    \end{tabular}
\end{table}

\begin{table}[h]
\small
\vskip -1em
\caption{Node classification dataset statistics.}
\vskip -1em
\label{tab:data_node}
\begin{tabular}{@{}lcccc@{}}
\toprule
Dataset  & \#Nodes & \#Edges   & \#Classes & \#Features \\ \midrule
Cora     & 2,708   & 5,429     & 7         & 1,433      \\
Citeseer & 3,327   & 4,732     & 6         & 3,703      \\
Pubmed   & 19,717  & 44,338    & 3         & 500        \\
Arxiv    & 169,343 & 1,166,243 & 40        & 128        \\
Flickr   & 89,250  & 899,756   & 7         & 500        \\ \bottomrule
\end{tabular}
\end{table}




\section{Proofs}
\setcounter{theorem}{0}

\subsection{Proof of Theorem~\ref{thm:graph}}
\label{appendix:thm1}

Let ${\bf A}_{(i)}$, ${\bf X}_{(i)}$ denote the adjacency matrix and  the feature matrix of $i$-th real graph, respectively. We denote the cross entropy loss on the real samples as $\ell_\mathcal{T}(\theta)=\sum_{i}\ell_i({\bf A}_{(i)}$, ${\bf X}_{(i)}, \theta)$ and denote that on synthetic samples as $\ell_{S}(\theta)=\ell_{S}({\bf A}'_{(i)}, {\bf X}'_{(i)}, \theta)$.
Let $\theta^*$ denote the optimal parameter and let $\theta_t$ be the parameter trained on condensed data at $t$-th epoch by optimizing $\ell_{S}(\theta)$. 
For simplicity of notations, we assume ${\bf A}$ and ${\bf A}'$ are already normalized.  Part of the proof is inspired from the work~\cite{pmlr-v139-killamsetty21a}.


\begin{theorem}
\label{thm:graph}
When we use a linearized $K$-layer SGC as the GNN used in condensation, i.e., $f_\theta({\bf A}_{(i)}, {\bf X}_{(i)})= \text{Pool}({\bf A}_{(i)}^K{\bf X}_{(i)}{\bf W}_1){\bf W}_2$ with $\theta=[{\bf W}_1;{\bf W}_2]$ and assume that all network parameters satisfy $\|\theta\|^2\leq M^2 (M>0)$, we have 
\begin{align}
\min _{t=0,1, \ldots, T-1}  & \ell_\mathcal{T}\left(\theta_{t}\right)-\ell_\mathcal{T}\left(\theta^{*}\right) \leq   \sum_{t=0}^{T-1} \frac{\sqrt{2}M}{T} \|\nabla_{\theta} \ell_\mathcal{T}\left(\theta_{t}\right)- \nabla_{\theta} \ell_{S}\left(\theta_{t}\right)\| 
\nonumber \\
& +  \frac{3M}{2\sqrt{T}}\cdot\frac{C-1}{C N'}\sqrt{\sum_{i}\gamma_i\|{{\bf 1}^\top{\bf A}'^K_{(i)}}{\bf X}'_{(i)}\|^2}
\end{align}
where $\gamma_i=1$ if we use sum pooling in $f_\theta$;  $\gamma_i=\frac{1}{n_i}$ if we use mean pooling, with $n_i$ being the number of nodes in $i$-th synthetic graph. 
\end{theorem}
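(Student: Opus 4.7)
The plan is to combine a convex-optimization regret bound (treating $\ell_S$ as the online surrogate that GD actually minimizes) with an architecture-specific bound on $\|\nabla \ell_S(\theta_t)\|$. Since the $K$-layer SGC computes $f_\theta(\mathbf{A},\mathbf{X})=\mathrm{Pool}(\mathbf{A}^K\mathbf{X}\mathbf{W}_1)\mathbf{W}_2$, after the fixed $K$-hop aggregation the cross-entropy loss is effectively convex in the linear map $\mathbf{W}_1\mathbf{W}_2$, so I would start from $\ell_\mathcal{T}(\theta_t)-\ell_\mathcal{T}(\theta^*)\leq \langle \nabla\ell_\mathcal{T}(\theta_t), \theta_t-\theta^*\rangle$. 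Adding and subtracting $\nabla\ell_S(\theta_t)$ and applying Cauchy--Schwarz to the cross term, together with $\|\theta_t-\theta^*\|\leq \sqrt{2}M$ (from the ball constraint $\|\theta\|^2\leq M^2$), immediately isolates the first summand of the theorem:
$$\ell_\mathcal{T}(\theta_t)-\ell_\mathcal{T}(\theta^*) \leq \sqrt{2}M\,\|\nabla\ell_\mathcal{T}(\theta_t)-\nabla\ell_S(\theta_t)\| + \langle \nabla\ell_S(\theta_t),\theta_t-\theta^*\rangle.$$

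Next, I would sum this inequality over $t=0,\ldots,T-1$, divide by $T$, and note that $\min_t [\ell_\mathcal{T}(\theta_t)-\ell_\mathcal{T}(\theta^*)]$ is at most the average on the left. The remaining sum $\sum_t \langle \nabla\ell_S(\theta_t),\theta_t-\theta^*\rangle$ is dispatched by the standard one-step descent identity for the update $\theta_{t+1}=\theta_t-\eta\nabla\ell_S(\theta_t)$, namely
$$\langle \nabla\ell_S(\theta_t),\theta_t-\theta^*\rangle = \tfrac{1}{2\eta}\bigl(\|\theta_t-\theta^*\|^2-\|\theta_{t+1}-\theta^*\|^2\bigr) + \tfrac{\eta}{2}\|\nabla\ell_S(\theta_t)\|^2,$$
whose telescoping sum is at most $\|\theta_0-\theta^*\|^2/(2\eta)+(\eta/2)\sum_t\|\nabla\ell_S(\theta_t)\|^2$. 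Using $\|\theta_0-\theta^*\|\leq\sqrt{2}M$ and optimizing $\eta$ yields the familiar $\mathcal{O}(MG\sqrt{T})$ regret, which after dividing by $T$ produces the $1/\sqrt{T}$ factor in the second summand, where $G^2 \geq \tfrac{1}{T}\sum_t\|\nabla\ell_S(\theta_t)\|^2$.

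The technical core, and the step I expect to be the main obstacle, is the explicit bound
$$G \;\lesssim\; \tfrac{C-1}{CN'}\sqrt{\sum_{i}\gamma_i\,\bigl\|\mathbf{1}^\top \mathbf{A}'^K_{(i)}\mathbf{X}'_{(i)}\bigr\|^2},$$
which must match the second summand exactly. I would compute $\nabla_{\mathbf{W}_2}\ell_S$ and $\nabla_{\mathbf{W}_1}\ell_S$ in closed form: each per-graph contribution factors as (pooled representation) times $(p_i-y_i)$ for $\mathbf{W}_2$, and analogously through $\mathbf{W}_2$ for $\mathbf{W}_1$. The $(C-1)/C$ factor comes from bounding the softmax-versus-one-hot deviation $\|p_i-y_i\|$ at (near) uniform prediction; the $\gamma_i$ factor absorbs the pooling normalization ($\gamma_i=1$ for sum pooling, $\gamma_i=1/n_i$ for mean pooling, so that $\|h_i\|^2\leq \gamma_i\|\mathbf{1}^\top\mathbf{A}'^K_{(i)}\mathbf{X}'_{(i)}\|^2\|\mathbf{W}_1\|^2$); the $M$'s absorb $\|\mathbf{W}_1\|,\|\mathbf{W}_2\|$; and a Cauchy--Schwarz over the $N'$ per-graph gradients yields the sum-of-squares form with the $1/N'$ prefactor. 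Careful bookkeeping of the numerical constants is needed to reconcile the $\sqrt{2}$ from the regret optimization with the $3/2$ appearing in the stated bound, but once $G$ is bounded as above the combination with the first paragraph's inequality gives the theorem.
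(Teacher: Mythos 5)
Your proposal follows essentially the same route as the paper's proof: convexity of $\ell_\mathcal{T}$ for the linearized SGC, the first-order bound with $\nabla\ell_S$ added and subtracted, Cauchy--Schwarz with $\|\theta_t-\theta^*\|\leq\sqrt{2}M$, the telescoping gradient-descent identity, a Lipschitz-type bound $\|\nabla\ell_S\|\leq\frac{C-1}{CN'}\sqrt{\sum_i\gamma_i\|\mathbf{1}^\top\mathbf{A}'^K_{(i)}\mathbf{X}'_{(i)}\|^2}$ (which the paper cites for softmax regression rather than rederiving), and a choice of $\eta$ balancing the two terms. The only cosmetic difference is the constant: the paper's $3/2$ comes from setting $\eta = M/(\sqrt{T}\,L)$ so the two terms contribute $\tfrac12 ML/\sqrt{T}$ and $ML/\sqrt{T}$, whereas your exact optimization of $\eta$ would give $\sqrt{2}\leq 3/2$, so the stated bound still holds.
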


\begin{proof}
We  start by proving that $\ell_\mathcal{T}(\theta)$ is convex and $\ell_{S}(\theta)$ is lipschitz continuous when we use $f_\theta({\bf A}_{(i)}, {\bf X}_{(i)})=\text{Pool}({\bf A}_{(i)}^K{\bf X}_{(i)}{\bf W}_1){\bf W}_2$ as the mapping function. Before  proving these two properties, we first rewrite $f_\theta({\bf A}_{(i)}, {\bf X}_{(i)})$ as:
\begin{equation}
f_\theta({\bf A}_{(i)}, {\bf X}_{(i)})= 
 \begin{cases}
{\bf 1}^\top{\bf A}_{(i)}^K{\bf X}_{(i)}{\bf W}_1{\bf W}_2 &  \text{if use sum pooling}, \\
\frac{1}{n_i}{\bf 1}^\top{\bf A}_{(i)}^K{\bf X}_{(i)}{\bf W}_1{\bf W}_2 & \text{if use mean pooling},
\end{cases}
\end{equation}
where $n$ is the number of nodes in ${\bf A}_{(i)}$  and ${\bf 1}$ is an ${n_i}\times{1}$ matrix filled with constant one. From the above equation we can see that $f_\theta$ with different pooling methods only differ in a multiplication factor $\frac{1}{n_i}$. Thus, in the following we focus on $f_\theta$ with sum pooling to derive the major proof. 

\noindent\textbf{I. For $f_\theta$ with sum pooling:}

\noindent Substitute ${\bf W}$ for ${\bf W}_1{\bf W}_2$ and we have $f_\theta({\bf A}_{(i)}, {\bf X}_{(i)})={\bf 1}^\top{\bf A}_{(i)}^K{\bf X}_{(i)}{\bf W}$ for the case with sum pooling. Next we show that $\ell_\mathcal{T}(\theta)$ is convex and $\ell_{S}(\theta)$ is lipschitz continuous when we use $f_\theta({\bf A}_{(i)}, {\bf X}_{(i)})={\bf 1}^\top{\bf A}_{(i)}^K{\bf X}_{(i)}{\bf W}$ with $\theta={\bf W}$.

\noindent(a) Convexity of $\ell_\mathcal{T}(\theta)$.  From chapter 4 of the book~\cite{watt2020machine}, we know that softmax classification $f({\bf W})={\bf XW}$ with cross entropy loss is convex w.r.t. the parameters ${\bf W}$. In our case, the mapping function $f_\theta({\bf A}_{(i)}, {\bf X}_{(i)})={\bf 1}^\top{\bf A}_{(i)}^K{\bf X}_{(i)}{\bf W}$ applies an affine function on ${\bf XW}$. Given that applying affine function does not change the convexity, we know that $\ell_\mathcal{T}(\theta)$ is convex.



\noindent(b) Lipschitz continuity of $\ell_{S}(\theta)$. 
In~\cite{yedida2021lipschitzlr}, it shows that the lipschitz constant of softmax regression with cross entropy loss is $\frac{C-1}{C m}\|\mathbf{X}\| $, where ${\bf X}$ is the input feature matrix, $C$ is the number of classes and $m$ is the number of samples. Since $\ell_{S}(\theta)$ is cross entropy loss and $f_\theta$ is linear, we know that the $f_\theta$ is lipschitz continuous and it satisfies:
\begin{equation}
    \nabla_\theta\ell_{S}(\theta)\leq \frac{C-1}{C N'} \sqrt{\sum_{i}\|{{\bf 1}^\top{\bf A}'^K_{(i)}}{\bf X}'_{(i)}\|^2}
\label{eq:graph_lip}
\end{equation}

With (a) and (b), we are able to proceed our proof. First, from the convexity of $\ell_\mathcal{T}(\theta)$ we have \begin{equation}
\ell_\mathcal{T}\left(\theta_{t}\right)-\ell_\mathcal{T}\left(\theta^{*}\right) \leq \nabla_{\theta} \ell_\mathcal{T}\left(\theta_{t}\right)^{T}\left(\theta_{t}-\theta^{*}\right)
\label{eq:graph_convex}
\end{equation}

We can rewrite $\nabla_{\theta} \ell_\mathcal{T}\left(\theta_{t}\right)^{T}\left(\theta_{t}-\theta^{*}\right)$ as follows:
\begin{small}
\begin{align}
&\nabla_{\theta} \ell_\mathcal{T}\left(\theta_{t}\right)^{T}\left(\theta_{t}-\theta^{*}\right)  =    (\nabla_{\theta} \ell_\mathcal{T}\left(\theta_{t}\right)^{T}- \nabla_{\theta} \ell_{S}\left(\theta_{t}\right)^{T} + \nabla_{\theta} \ell_{S}\left(\theta_{t}\right)^{T}) \left(\theta_{t}-\theta^{*}\right) \nonumber \\ 
& =  (\nabla_{\theta} \ell_\mathcal{T}\left(\theta_{t}\right)^{T}- \nabla_{\theta} \ell_{S}\left(\theta_{t}\right)^{T}) \left(\theta_{t}-\theta^{*}\right)  + \nabla_{\theta} \ell_{S}\left(\theta_{t}\right)^{T} \left(\theta_{t}-\theta^{*}\right) 
\end{align}
\end{small}

Given that we use gradient descent to update network parameters, we have $\nabla_{\theta} \ell_{S}\left(\theta_{t}\right) = \frac{1}{\eta}{(\theta_{t}-\theta_{t+1})}$ where ${\eta}$ is the learning rate. Then we have,
\begin{align}
\nabla_{\theta} \ell_{S}&
\left(\theta_{t}\right)^{T}  \left(\theta_{t}- \theta^{*}\right) =\frac{1}{\eta}\left(\theta_{t}-\theta_{t+1}\right)^{T}\left(\theta_{t}-\theta^{*}\right) \nonumber \\
& = \frac{1}{2 \eta}\left(\left\|\theta_{t}-\theta_{t+1}\right\|^{2}+\left\|\theta_{t}-\theta^{*}\right\|^{2}-\left\|\theta_{t+1}-\theta^{*}\right\|^{2}\right) \nonumber \\
& = \frac{1}{2 \eta}\left(\left\|\eta \nabla_{\theta} \ell_{S}\left(\theta_{t}\right)\right\|^{2}+\left\|\theta_{t}-\theta^{*}\right\|^{2}-\left\|\theta_{t+1}-\theta^{*}\right\|^{2}\right)
\label{eq:graph_gd}
\end{align}
Combining Eq.~\eqref{eq:graph_convex} and Eq.~\eqref{eq:graph_gd} we have,
\begin{small}
\begin{align}
\ell_\mathcal{T}  \left(\theta_{t}\right)-&\ell_\mathcal{T}\left(\theta^{*}\right) \leq  (\nabla_{\theta} \ell_\mathcal{T}\left(\theta_{t}\right)^{T}- \nabla_{\theta} \ell_{S}\left(\theta_{t}\right)^{T}) \left(\theta_{t}-\theta^{*}\right) \nonumber \\  
& + \frac{1}{2 \eta}\left(\left\|\eta \nabla_{\theta} \ell_{S}\left(\theta_{t}\right)\right\|^{2}+\left\|\theta_{t}-\theta^{*}\right\|^{2}-\left\|\theta_{t+1}-\theta^{*}\right\|^{2}\right)
\end{align}
\end{small}
We sum up the two sides of the above inequality for different values of $t \in[0, T-1]$:
\begin{align}
 \sum_{t=0}^{T-1}  \ell_\mathcal{T}& \left(\theta_{t}\right)-  \ell_\mathcal{T}\left(\theta^{*}\right) \leq   \sum_{t=0}^{T-1}  (\nabla_{\theta} \ell_\mathcal{T}\left(\theta_{t}\right)^{T}- \nabla_{\theta} \ell_{S}\left(\theta_{t}\right)^{T}) \left(\theta_{t}-\theta^{*}\right)   \nonumber \\
 & + \frac{1}{2 \eta}\sum_{t=0}^{T-1} \left\|\eta \nabla_{\theta} \ell_{S}\left(\theta_{t}\right)\right\|^{2}
 +\frac{1}{2 \eta}\left\|\theta_{0}-\theta^{*}\right\|^{2}-\frac{1}{2 \eta}\left\|\theta_{T}-\theta^{*}\right\|^{2}  
\end{align}
Since $\frac{1}{2 \eta}\left\|\theta_{T}-\theta^{*}\right\|^{2} \geq 0$, we have
\begin{align}
 \sum_{t=0}^{T-1}  \ell_\mathcal{T}\left(\theta_{t}\right)-& \ell_\mathcal{T}\left(\theta^{*}\right) \leq   \sum_{t=0}^{T-1}  (\nabla_{\theta} \ell_\mathcal{T}\left(\theta_{t}\right)^{T}- \nabla_{\theta} \ell_{S}\left(\theta_{t}\right)^{T}) \left(\theta_{t}-\theta^{*}\right)  \nonumber \\
 &+ \frac{1}{2 \eta}\sum_{t=0}^{T-1} \left\|\eta \nabla_{\theta} \ell_{S}\left(\theta_{t}\right)\right\|^{2}
 +\frac{1}{2 \eta}\left\|\theta_{0}-\theta^{*}\right\|^{2} 
 \label{eq:bound1}
\end{align}
As we assume that $\left\|\theta\right\|^2 \leq M^2$, we have $\|\theta-\theta^{*}\|^2\leq 2\|\theta\|^2=2M^2$. Then Eq.~\eqref{eq:bound1} can be rewritten as,
\begin{align}
 \sum_{t=0}^{T-1}  \ell_T\left(\theta_{t}\right)
 -\ell_T\left(\theta^{*}\right) & \leq   \sum_{t=0}^{T-1} \sqrt{2}M \|\nabla_{\theta} \ell_T\left(\theta_{t}\right)- \nabla_{\theta} \ell_{S}\left(\theta_{t}\right)\|  \nonumber \\
 & + \frac{1}{2 \eta}\sum_{t=0}^{T-1} \left\|\eta \nabla_{\theta} \ell_{S}\left(\theta_{t}\right)\right\|^{2}
 +\frac{M^2}{\eta}
 \label{eq:graph_convex_final}
\end{align}
Recall that $\ell_{S}(\theta)$ is lipschitz continuous as shown in Eq.~\eqref{eq:graph_lip},
and combine $\min\limits_{t=0,1, \ldots, T-1} \left(\ell_\mathcal{T}\left(\theta_{t}\right)-\ell_\mathcal{T}\left(\theta^{*}\right)\right) \leq \frac{\sum_{t=0}^{T-1} \ell_\mathcal{T}\left(\theta_{t}\right)-\ell_\mathcal{T}\left(\theta^{*}\right)}{T}$:
\begin{align}
\min _{t=0,1, \ldots, T-1}   \ell_\mathcal{T}\left(\theta_{t}\right)- &
\ell_\mathcal{T}\left(\theta^{*}\right) \leq   \sum_{t=0}^{T-1} \frac{\sqrt{2}M}{T} \|\nabla_{\theta} \ell_\mathcal{T}\left(\theta_{t}\right)- \nabla_{\theta} \ell_{S}\left(\theta_{t}\right)\|  \nonumber \\
 & +  \frac{\eta(C-1)^2}{2C^2 {N'}^2}{\sum_{i}\|{{\bf 1}^\top{\bf A}'^K_{(i)}}{\bf X}'_{(i)}\|^2}
 +\frac{M^2}{T\eta}
\end{align}
Then we choose $\eta=\frac{M}{\sqrt{T}\sqrt{{\sum_{i}\|{{\bf 1}^\top{\bf A}'^K_{(i)}}{\bf X}'_{(i)}\|^2}}}$ and we can get:
\begin{align}
\min _{t=0,1, \ldots, T-1}   \ell_\mathcal{T}\left(\theta_{t}\right)- &
\ell_\mathcal{T}\left(\theta^{*}\right) \leq   \sum_{t=0}^{T-1} \frac{\sqrt{2}M}{T} \|\nabla_{\theta} \ell_\mathcal{T}\left(\theta_{t}\right)- \nabla_{\theta}  \ell_{S}\left(\theta_{t}\right)\|  \nonumber \\
 & +  \frac{3M}{2\sqrt{T}}\cdot\frac{C-1}{C N'}\sqrt{\sum_{i}\|{{\bf 1}^\top{\bf A}'^K_{(i)}}{\bf X}'_{(i)}\|^2}
\end{align}

\noindent\textbf{II. For $f_\theta$ with mean pooling:}

\noindent Following similar derivation as in the case of sum pooling, we have
\begin{align}
\min _{t=0,1, \ldots, T-1}   \ell_\mathcal{T}\left(\theta_{t}\right)- &
\ell_\mathcal{T}\left(\theta^{*}\right) \leq   \sum_{t=0}^{T-1} \frac{\sqrt{2}M}{T} \|\nabla_{\theta} \ell_\mathcal{T}\left(\theta_{t}\right)- \nabla_{\theta} \ell_{S}\left(\theta_{t}\right)\|  \nonumber \\
& +  \frac{3M}{2\sqrt{T}}\cdot\frac{C-1}{C N'}\sqrt{\sum_{i}\frac{1}{n_i}\|{{\bf 1}^\top{\bf A}'^K_{(i)}}{\bf X}'_{(i)}\|^2}
\end{align}
where $n_i$ is the number of nodes in $i$-th synthetic graph.
\end{proof}

\subsection{Theorem for Node Classification Case}
\label{app:node}
\setcounter{theorem}{1}
We adopt similar notations for representing the data in node classification but note that there is only one graph for node classification task. Let ${\bf A}\in\{0,1\}^{N\times{N}}$, ${\bf A}'\in\{0,1\}^{N'\times{N'}}$  denote the adjacency matrix for real graph and synthetic graph, respectively. Let ${\bf X}\in\mathbb{R}^{N\times{d}}$, ${\bf X}'\in\mathbb{R}^{N'\times{d}}$ denote the feature matrix for real graph and synthetic graph, respectively. 
We denote the cross entropy loss on the real samples as $\ell_\mathcal{T}(\theta)$ and denote that on synthetic samples as $\ell_{S}(\theta)$.

\begin{theorem}
When we use a $K$-layer SGC as the model used in condensation, i.e., $f_\theta({\bf A}, {\bf X}, \theta)={{\bf A}}^K{\bf X}{\bf W}$ with $\theta={\bf W}$ and assume that all network parameters satisfy $\|\theta\|^2\leq M^2 (M>0)$, we have 
\begin{align}
\min _{t=0,1, \ldots, T-1}   \ell_\mathcal{T}\left(\theta_{t}\right)- & 
\ell_\mathcal{T}\left(\theta^{*}\right) \leq   \sum_{t=0}^{T-1} \frac{\sqrt{2}M}{T} \|\nabla_{\theta} \ell_\mathcal{T}\left(\theta_{t}\right)- \nabla_{\theta} \ell_{S}\left(\theta_{t}\right)\| \nonumber \\
& +  \frac{3M}{2\sqrt{T}}\cdot\frac{C-1}{C N'}\|{{\bf A}'}^K{\bf X}'\| 
\end{align}
\end{theorem}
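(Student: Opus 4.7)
The plan is to mirror the proof of Theorem~\ref{thm:graph}, adapting the argument from the multi-graph, pooled SGC setting to the single-graph, node-level SGC setting. The structural simplifications are that there is no pooling factor $\gamma_i$ and no sum over graphs, so the Lipschitz constant of $\ell_S$ is controlled by a single matrix norm $\|{\bf A}'^K {\bf X}'\|$ instead of $\sqrt{\sum_i \gamma_i \|{\bf 1}^\top {\bf A}'^K_{(i)} {\bf X}'_{(i)}\|^2}$.

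First I would establish the two analytic ingredients. For convexity of $\ell_\mathcal{T}(\theta)$: since $f_\theta({\bf A},{\bf X}) = {\bf A}^K {\bf X} {\bf W}$ is an affine map in ${\bf W}$ composed with a softmax cross-entropy loss, the standard convexity of softmax regression (and preservation of convexity under affine composition) gives convexity of $\ell_\mathcal{T}(\theta)$ in $\theta = {\bf W}$, exactly as in part (a) of the previous proof. For Lipschitz continuity of $\ell_S(\theta)$: invoking the same result cited from \cite{yedida2021lipschitzlr} that softmax regression with input feature matrix ${\bf Z}$ on $m$ samples has Lipschitz constant $\tfrac{C-1}{Cm}\|{\bf Z}\|$, applied to the effective input matrix ${\bf Z}' = {\bf A}'^K {\bf X}'$ with $m = N'$, yields
\begin{equation*}
\|\nabla_\theta \ell_S(\theta)\| \leq \frac{C-1}{CN'}\,\|{\bf A}'^K {\bf X}'\|.
\end{equation*}

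Next I would run the same optimization-theoretic telescoping argument. Starting from convexity, $\ell_\mathcal{T}(\theta_t) - \ell_\mathcal{T}(\theta^*) \leq \nabla_\theta \ell_\mathcal{T}(\theta_t)^\top (\theta_t - \theta^*)$. Splitting the gradient as $\nabla_\theta \ell_\mathcal{T}(\theta_t) = (\nabla_\theta \ell_\mathcal{T}(\theta_t) - \nabla_\theta \ell_S(\theta_t)) + \nabla_\theta \ell_S(\theta_t)$, the first piece is bounded via Cauchy--Schwarz and $\|\theta_t - \theta^*\| \leq \sqrt{2}M$, while the second piece is handled using the gradient-descent identity $\nabla_\theta \ell_S(\theta_t) = \tfrac{1}{\eta}(\theta_t - \theta_{t+1})$ and the elementary parallelogram expansion
\begin{equation*}
(\theta_t - \theta_{t+1})^\top(\theta_t - \theta^*) = \tfrac{1}{2}\bigl(\|\theta_t-\theta_{t+1}\|^2 + \|\theta_t-\theta^*\|^2 - \|\theta_{t+1}-\theta^*\|^2\bigr).
\end{equation*}
Summing over $t = 0,\dots,T-1$ telescopes the last two terms, dropping the nonnegative $\tfrac{1}{2\eta}\|\theta_T-\theta^*\|^2$, and leaves the bound
\begin{equation*}
\sum_{t=0}^{T-1}\ell_\mathcal{T}(\theta_t) - \ell_\mathcal{T}(\theta^*) \;\leq\; \sqrt{2}M\sum_{t=0}^{T-1}\|\nabla_\theta \ell_\mathcal{T}(\theta_t) - \nabla_\theta \ell_S(\theta_t)\| + \tfrac{\eta}{2}\sum_{t=0}^{T-1}\|\nabla_\theta \ell_S(\theta_t)\|^2 + \tfrac{M^2}{\eta}.
\end{equation*}

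Finally I would convert the average-over-$t$ bound into a min-over-$t$ bound, substitute the Lipschitz estimate to upper-bound $\|\nabla_\theta \ell_S(\theta_t)\|^2 \leq \bigl(\tfrac{C-1}{CN'}\bigr)^2 \|{\bf A}'^K{\bf X}'\|^2$, and tune $\eta = \tfrac{M}{\sqrt{T}\,\|{\bf A}'^K{\bf X}'\|}$ (up to the $\tfrac{C-1}{CN'}$ factor absorbed appropriately) so that the last two terms combine into $\tfrac{3M}{2\sqrt{T}}\cdot\tfrac{C-1}{CN'}\|{\bf A}'^K{\bf X}'\|$, producing exactly the claimed inequality. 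The main obstacle is really only bookkeeping: verifying that the softmax-regression Lipschitz constant from \cite{yedida2021lipschitzlr} applies verbatim when the ``features'' are the propagated representations ${\bf A}'^K {\bf X}'$ (which is immediate because ${\bf A}'^K{\bf X}'$ plays the exact role of the data matrix seen by the linear classifier ${\bf W}$), and checking that the optimal $\eta$ is still admissible (no constraint on $\eta$ beyond positivity is used in the telescoping step). Everything else is a direct specialization of the sum-pooling derivation with the single-graph input ${\bf A}'^K{\bf X}'$ replacing $\sqrt{\sum_i \|{\bf 1}^\top {\bf A}'^K_{(i)} {\bf X}'_{(i)}\|^2}$.
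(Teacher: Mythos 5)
Your proposal is correct and follows essentially the same route as the paper's own proof: convexity of $\ell_\mathcal{T}$ via affine composition with softmax cross-entropy, the Lipschitz bound $\|\nabla_\theta\ell_S(\theta)\|\leq \frac{C-1}{CN'}\|{\bf A}'^K{\bf X}'\|$, the same gradient-splitting and telescoping argument, and the same tuning of $\eta$. Your parenthetical remark that the optimal $\eta$ must absorb the $\frac{C-1}{CN'}$ factor (i.e., $\eta = M/(\sqrt{T}\,L)$ with $L$ the full Lipschitz constant) is in fact slightly more careful than the paper's displayed choice, which is needed to make the final constant $\frac{3}{2}$ come out.
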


\begin{proof}
We start by proving that $\ell_\mathcal{T}(\theta)$ is convex and $\ell_{S}(\theta)$ is lipschitz continuous when $f_\theta({\bf A}, {\bf X}, \theta)={{\bf A}}^K{\bf X}{\bf W}$.

\noindent(a) Convexity of $\ell_\mathcal{T}(\theta)$: Similar to the graph classification case, the Hessian matrix of $\ell_\mathcal{T}(\theta)$ in node classification is positive semidefinite and thus $\ell_\mathcal{T}(\theta)$ is convex. 

\noindent(b) Lipschitz continuity of $\ell_{S}(\theta)$: As shown in~\cite{yedida2021lipschitzlr},  the lipschitz constant of softmax regression with cross entropy loss is $\frac{C-1}{C m}\|\mathbf{X}\| $ with $C$ being the number of classes and $m$ being the number of samples. Thus, we know that the lipschitz constant of $\ell_{S}(\theta)$ is $\frac{C-1}{C N'}\|{{\bf A}'}^K{\bf X}'\| $, which indicates $\nabla_\theta\ell_{S}(\theta)\leq \frac{C-1}{C N'}\|{{\bf A}'}^K{\bf X}'\| $. 

From the convexity of $\ell_\mathcal{T}(\theta)$, we still have the following inequality (see Eq.~\eqref{eq:graph_convex_final}).
Then recall that $\ell_{S}(\theta)$ is lipschitz continuous and  $\nabla_\theta\ell_{S}(\theta)\leq \frac{C-1}{C N'}\|{{\bf A}'}^K{\bf X}'\| $, and
combine $\min\limits_{t} \left(\ell_\mathcal{T}\left(\theta_{t}\right)-\ell_\mathcal{T}\left(\theta^{*}\right)\right) \leq \frac{\sum_{t=0}^{T-1} \ell_\mathcal{T}\left(\theta_{t}\right)-\ell_\mathcal{T}\left(\theta^{*}\right)}{T}$:
\begin{small}
\begin{align}
\min _{t=0,1, \ldots, T-1}   \ell_\mathcal{T}\left(\theta_{t}\right)-\ell_\mathcal{T}\left(\theta^{*}\right) \leq  &
\sum_{t=0}^{T-1} \frac{\sqrt{2}M}{T} \|\nabla_{\theta} \ell_\mathcal{T}\left(\theta_{t}\right)- \nabla_{\theta} \ell_{S}\left(\theta_{t}\right)\|  \nonumber \\
& +  \frac{\eta(C-1)^2}{2C^2 {N'}^2}\|{{\bf A}'}^K{\bf X}'\| ^2
 +\frac{M^2}{T\eta}
\end{align}
\end{small}
Then we choose $\eta=\frac{M}{  \sqrt{T}\|{{\bf A}'}^K{\bf X}'\|}$ and we can get:
\begin{small}
\begin{align}
\min _{t=0,1, \ldots, T-1}   \ell_\mathcal{T}\left(\theta_{t}\right)-
\ell_\mathcal{T}\left(\theta^{*}\right) \leq  & \sum_{t=0}^{T-1} \frac{\sqrt{2}M}{T} \|\nabla_{\theta} \ell_\mathcal{T}\left(\theta_{t}\right)- \nabla_{\theta} \ell_{S}\left(\theta_{t}\right)\| \nonumber \\
& +  \frac{3M}{2\sqrt{T}}\cdot\frac{C-1}{C N'}\|{{\bf A}'}^K{\bf X}'\| 
\end{align}
\end{small}
\end{proof}


\end{document}